\documentclass[sigconf, nonacm]{aamas}

\usepackage{balance} 
\usepackage{algorithm}
\usepackage[noend]{algpseudocode}
\usepackage{graphicx}

\usepackage{amsthm} 
\usepackage{amsmath}
\usepackage{subcaption}
\usepackage{multirow}
\usepackage{pifont}      
\usepackage[title]{appendix}
\usepackage{makecell}
\usepackage{todonotes}

\doi{LIEI2830}

\makeatletter
\gdef\@copyrightpermission{
  \begin{minipage}{0.2\columnwidth}
   \href{https://creativecommons.org/licenses/by/4.0/}{\includegraphics[width=0.90\textwidth]{by}}
  \end{minipage}\hfill
  \begin{minipage}{0.8\columnwidth}
   \href{https://creativecommons.org/licenses/by/4.0/}{This work is licensed under a Creative Commons Attribution International 4.0 License.}
  \end{minipage}
  \vspace{5pt}
}
\makeatother

\setcopyright{ifaamas}
\acmConference[AAMAS '26]{Proc.\@ of the 25th International Conference
on Autonomous Agents and Multiagent Systems (AAMAS 2026)}{May 25 -- 29, 2026}
{Paphos, Cyprus}{C.~Amato, L.~Dennis, V.~Mascardi, J.~Thangarajah (eds.)}
\copyrightyear{2026}
\acmYear{2026}
\acmDOI{}
\acmPrice{}
\acmISBN{}

\title{From User Preferences to Base Score Extraction Functions \\in Gradual Argumentation}

\author{Aniol Civit}
\affiliation{
  \institution{Institut de Robòtica i Informàtica Industrial, CSIC-UPC}
  \city{Barcelona}
  \country{Spain}}
\email{acivit@iri.upc.edu}

\author{Antonio Rago}
\affiliation{
  \institution{King's College London}
  \city{London}
  \country{United Kingdom}}
\email{antonio.rago@kcl.ac.uk}
  
\author{Antonio Andriella}
\affiliation{
  \institution{Institut de Robòtica i Informàtica Industrial, CSIC-UPC}
  \city{Barcelona}
  \country{Spain}}
\email{aandriella@iri.upc.edu}

\author{Guillem Alenyà}
\affiliation{
  \institution{Institut de Robòtica i Informàtica Industrial, CSIC-UPC}
  \city{Barcelona}
  \country{Spain}}
\email{galenya@iri.upc.edu}

\author{Francesca Toni}
\affiliation{
  \institution{Imperial College London}
  \city{London}
  \country{United Kingdom}}
\email{ft@imperial.ac.uk}

\begin{abstract}
    Gradual argumentation is a sub-field of Computational Argumentation from symbolic AI which is attracting attention for its ability to support transparent and contestable AI systems. It is considered a useful tool in domains such as decision-making, recommendation, debate analysis, amongst others. The outcomes in such domains are usually dependent on the arguments' base scores, which must be selected carefully. Often, this selection process requires user expertise and may not always be straightforward. On the other hand, organising the arguments by preference could simplify the task. In this work, we introduce \emph{Base Score Extraction Functions}, which provide a mapping from users' preferences over arguments to base scores. These functions can be applied to the arguments of a \emph{Bipolar Argumentation Framework} (BAF), supplemented with preferences, to obtain a \emph{Quantitative Bipolar Argumentation Framework} (QBAF), allowing the use of well-established computational tools in gradual argumentation. We outline the desirable properties of Base Score Extraction Functions, discuss some design choices, and provide an algorithm for base score extraction. Our method incorporates an approximation of non-linearities in human preferences to allow for better approximation of the real ones. Finally, we evaluate our approach both theoretically and experimentally in a robotics setting, and offer recommendations for selecting appropriate gradual semantics in practice.
\end{abstract}

\keywords{Gradual Argumentation; Base Score Extraction; User Preferences}

\theoremstyle{plain}
\newtheorem{defn}{Definition}
\newtheorem{property}{Property}
\newtheorem{proposition}{Proposition}
\newtheorem{axiom}{Axiom}

\theoremstyle{definition}

\newtheorem{example}{Example}
\newtheorem{bsef}{Base Score Extraction Function}

\newcommand{\cmark}{\ding{51}} 

\begin{document}


\pagestyle{fancy}
\fancyhead{}


\maketitle 


\section{Introduction}

    Abstract Argumentation Frameworks (AFs)~\cite{Dung95} have long been known to be a promising 
    method for adaptable decision-making systems~\cite{Amgoud_AAI09}. These 
    frameworks are composed of arguments and relations between them, representing conflicts. Semantics, whether extension-based~\cite{Dung95} or gradual~\cite{AAgradual19}, may then be used to determine the acceptance of the arguments, e.g., as potential decisions to be taken. However, in real-world scenarios involving human reasoning, having only arguments and their relations is rarely sufficient to align the decision towards human intentions. For example, some studies claim that the acceptability of arguments is related to the preferences of the audience to which they are addressed~\cite{Kaci_CMA18}.

    Aligning decision-making systems with human preferences~\cite{Gabriel_MM20}, and also values~\cite{Sierra_aamas19}, has never been more important, given their increasing prevalence in humans' daily lives. Within the field of Computational Argumentation, this has been accomplished by extending AFs into Preference-based AFs (PAFs)~\cite{Amgoud_uai13}, in which the arguments' acceptability is based not only on 
    relations and the chosen semantics, but also on a preference ordering. 

    \begin{figure}
        \centering
        \includegraphics[width=0.85\linewidth]{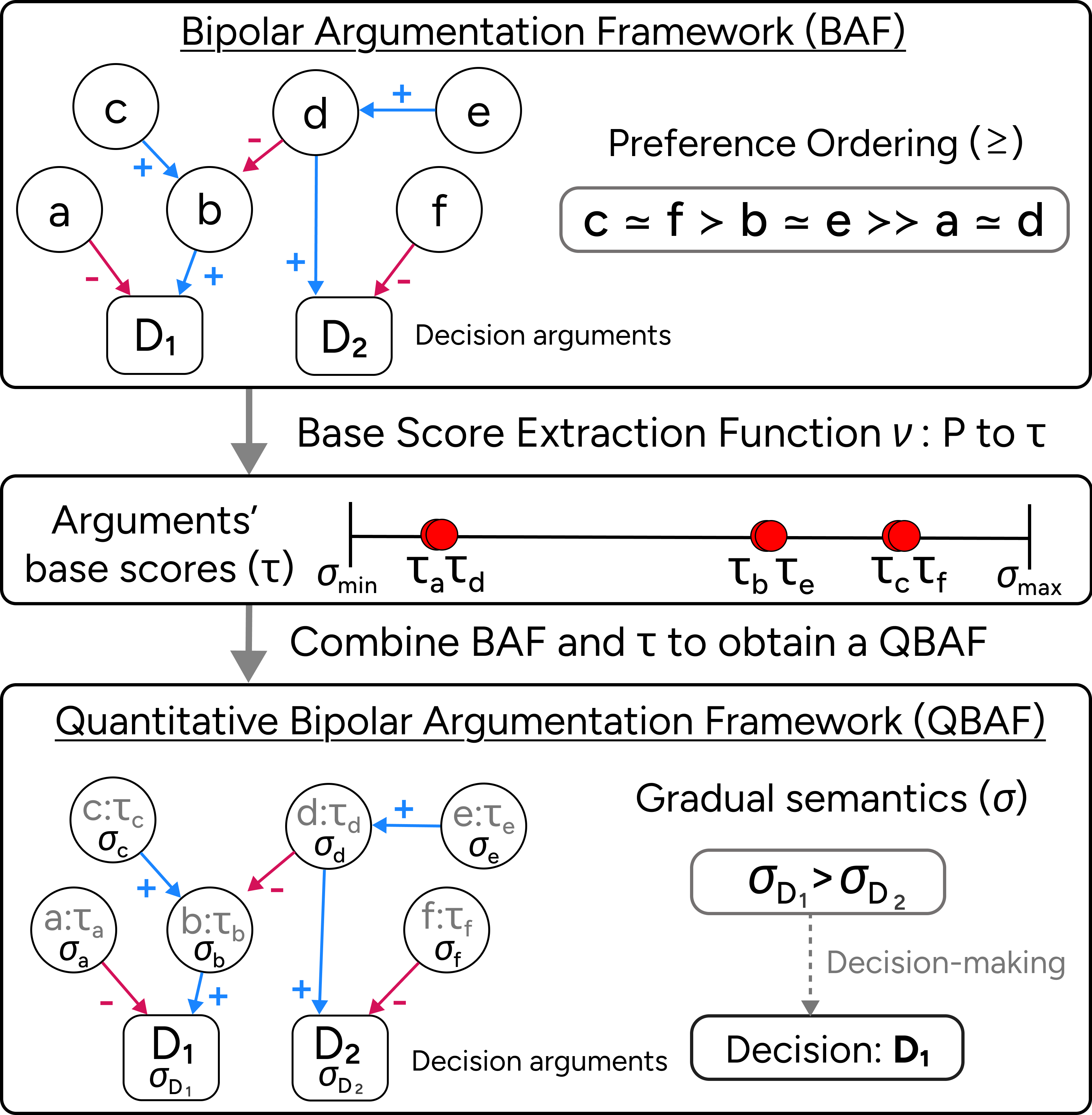}
        \caption{We introduce a methodology for converting a BAF supplemented with a preference ordering of the arguments (top), into a QBAF (bottom). A Base Score Extraction Function $\nu$ is applied to the preference ordering to obtain the arguments' base scores $\tau$ (middle), which allow for the QBAF to be used as normal for decision-making. The output of the decision-making system is adapted to the user's preferences.
        }
        \label{fig:intro_vertical}
        \Description{A BAF appears at the top of the image, with a preference ordering over the arguments. In the middle of the image, a base Score Extraction Function is applied to the preference ordering to obtain the base scores of the arguments. At the bottom of the image, a QBAF is generated using the base scores that were obtained. A decision-making system selects an option between two topic arguments of the QBAF.}
    \end{figure}

    A current limitation in extension-based PAFs is that preferences are used to reduce or modify the framework~\cite{Amgoud_IJAR14, Kaci_CMA18}. Those reductions correspond to different intuitions, and each one is subject to criticism. 
    Moreover, many decision-making settings require a single output, whereas qualitative frameworks may return multiple undominated options or none at all~\cite{Amgoud_AI09}. In contrast, Gradual Argumentation ensures a determinate outcome while remaining transparent about score contributions~\cite{Civit_arxiv25}. In~\cite{Battaglia_icsum24}, reductions are avoided by extending PAFs to \emph{Quantitative Bipolar AFs} (QBAFs) \cite{Baroni_IJAR19}.
    In QBAFs, the arguments can attack and support each other, and each argument is assigned a base score, i.e. an intrinsic strength. Gradual semantics of QBAFs~\cite{Baroni_IJAR19} then assign a final strength to each argument, providing a different form of information than traditional AFs. QBAFs are currently used in several domains, such as decision support~\cite{Battaglia_icsum24}, product recommendation~\cite{Rago_ijcai18}, review aggregation~\cite{Cocarascu_aamas19, Rago_AI25}, and decision-making
    ~\cite{Freedman_aaai25}, given their potential for giving 
    transparent and contestable systems~\cite{Leofante_KR24, Yin_arxiv25}. Nonetheless, in some scenarios, it is often unclear how to set the arguments' base scores. 
    
    In this work, we propose a theoretically and experimentally motivated methodology for generating these base scores based on a given set of user preferences. Our approach is based on adding preferences to \emph{Bipolar Argumentation Frameworks} (BAFs) \cite{Cayrol_ecsqaru05}, i.e., AFs with an additional relation of support (but without a base score). We operate on the hypothesis that the given preference ordering allows for a Base Score Extraction Function (BSEF) to predict the arguments' base scores. 
    Our approach is 
    illustrated 
    in Fig.~\ref{fig:intro_vertical}. 
    Note that, to ensure that it is able to represent realistic user preferences, on top of the usual `I prefer 
    one argument over another', we introduce the possibility of having a preference relation in which one argument is \emph{much more} preferred than another. We then give some general axioms that any BSEF should satisfy. Next, we provide different design choices for adapting the base score extraction, which are formally characterised as potentially desirable, optional properties. Finally, we introduce some concrete BSEFs and evaluate them through both a theoretical analysis and experimentation on a proposed use case, namely \emph{assistive feeding in robotics}. 

    Our contributions are as follows: (i) the introduction of BSEFs, which extract the base scores of a set of arguments given a preference ordering, extended to non-linear preferences; (ii) the definition of a set of axioms and properties for BSEFs; (iii) the introduction of two concrete BSEFs; and (iv) a theoretical and experimental evaluation of the two BSEFs.

    To foster transparency and reproducibility, we have released the full implementation 
    at \url{https://github.com/acivit/From-Preferences-To-Base-Score-Extraction-Functions}.


\section{Preliminaries}

    In this section, we recall the definitions of BAFs, QBAFs and their gradual semantics. 


    A BAF~\cite{Cayrol_ecsqaru05} is a triple $\mathcal{B}=\langle \mathcal{X}, \mathcal{R}^-, \mathcal{R}^+ \rangle$, consisting of a finite set of arguments $\mathcal{X}$, a binary relation of attack $\mathcal{R}^- \subseteq \mathcal{X} \times \mathcal{X}$, and a binary relation of support $\mathcal{R}^+\subseteq \mathcal{X} \times \mathcal{X}$. 

    A QBAF~\cite{Baroni_IJAR19} is a quadruple $ \mathcal{Q}=\langle \mathcal{X}, \mathcal{R}^-, \mathcal{R}^+, \tau\rangle$, where $\langle \mathcal{X}, \mathcal{R}^-, \mathcal{R}^+ \rangle$ is a BAF and $\tau: \mathcal{X} \to [0,1]$ is a base score function. For any $a \in \mathcal{X}$, $\tau(a)$ is the base score of $a$.

    In this work, we use the standard range for both the base scores and argument strengths, i.e., $[0,1]$, though others exist~\cite{Amgoud_kr16}. The strength of an argument $a \in \mathcal{X}$ is given by $\sigma(a)$, where $\sigma: \mathcal{X} \to [0,1]$ 
    is a gradual semantics~\cite{Baroni_IJAR19}.

    As mentioned, we use QBAFs for decision-making. In this context, the possible options of the decision process are included in the QBAF. These are called decision arguments. The QBAF is represented as a set of trees, with the root of each tree corresponding to a decision argument~\cite{Rago_kr23}.
    (We leave cyclic QBAFs to future work, as in \cite{Potyka_aaai21, Rago_kr23, Rago_AI25}.)
    Let $\mathcal{Q}$ be a QBAF $\langle \mathcal{X}, \mathcal{R}^-, \mathcal{R}^+, \tau\rangle$. For any arguments $a,b\in \mathcal{X}$, let a path from $a$ to $b$ be defined as a sequence 
    $(\gamma_0,\gamma_1),...,$ $(\gamma_{n-1},\gamma_n)$ of length $n>0$, where $\gamma_0=a$ and $\gamma_n=b$, and, for any $1\leq i \leq n, (\gamma_{i-1}, \gamma_i)\in \mathcal{R}^+ \cup \mathcal{R}^-$. Then, given a set of decision arguments $D\subseteq\mathcal{X}$, $\mathcal{Q}$ is a QBAF for $D$ iff i) $\nexists a\in \mathcal{X}\setminus \{ D\}$ such that $\exists d\in D$ where $(d, a)\in \mathcal{R}^+\cup\mathcal{R}^-$ ii) $\forall a\in \mathcal{X} \setminus \{D \}$ there is a path from $a$ to at least one $d \in D$; and iii) $\nexists a \!\in \! \mathcal{X}$ with a path from $a$ to $a$. We set the base score of the decision arguments to 0.5.
    
    In the remainder of the paper, unless otherwise specified, we will assume a BAF $\langle \mathcal{X}, \mathcal{R}^-, \mathcal{R}^+ \rangle$.

    As a running example, we will use robot-assisted feeding. Here, a robot feeds a user who cannot eat independently and must decide the pace of feeding, either slow or fast. The robot uses a BAF drawing arguments from both the user and the robot's observations, e.g., user status and history. 
    The arguments are: $(a)$ Eating slowly causes boredom; $(b)$ eating slowly does not stress the patient out; $(c)$ the patient is vulnerable and stress must be avoided; $(d)$ eating slowly reduces time with a visiting niece; $(e)$ the patient wants to tell their niece something important, and doing so will help the patient to feel more relaxed; and $(f)$ eating quickly carries a (low) risk of dysphagia. The decision arguments are \textit{slow} ($D_1$) and \textit{fast} ($D_2$). 
    The BAF is represented in Fig.~\ref{fig:intro_vertical}.
    The aim is to select the best option, namely any decision argument with the highest final strength in the QBAF obtained after predicting the arguments' base scores to adhere to the preferences (again, as in Fig.~\ref{fig:intro_vertical}). 


\section{Extracting Base Scores from Preference Orderings}
    
    In this section, we show that adding preferences to BAFs allows us to define a BSEF, which may exhibit some defined desirable axioms and properties. Then, the BSEF is used to extract the arguments' base scores, effectively converting the BAF into a QBAF.

    \subsection{Adding Preferences to BAFs}

        Adding preferences over arguments is a common approach to personalising argumentation frameworks for users~\cite{Modgil_AI13, Amgoud_uai13, Mailly_kr20}. We 
        formally define preferences over arguments as follows:

        \begin{defn}\label{def:pref_ordering}
            A preference $\succeq
            $ over $\mathcal{X}$ is a reflexive and transitive relation on $\mathcal{X}$. Given $a,b\in \mathcal{X}$, $a\succeq b$ and $a\npreceq b$ will be denoted as $a\succ b$, while $a\succeq b$ and $a\preceq b$ will be denoted as $a \simeq b$. 
        \end{defn}
        

        Setting the base scores according to the preference ordering of a BAF's arguments in a decision-making environment can give an intuition of the most suitable decisions in line with a user's preferences. The following examples show the importance of properly setting the arguments' base scores: 

        \begin{example}
        \label{ex:introduction}
            Consider the framework from Fig.~\ref{fig:intro_vertical}. If the base scores are arbitrarily set to 0.5, giving equal importance to all arguments, using the Quadratic Energy (QE) Model~\cite{Potyka_kr18}, the final strengths of the decision arguments are $\sigma(D_1)=0.5$ for slow, and $\sigma(D_2)=0.505$ for fast. Then the robot would choose to move fast.
        \end{example}

        \begin{example}
\label{ex:introduction_preferences}
            (Example~\ref{ex:introduction} Cont.) 
           Imagine a scenario where a user gives more importance to the arguments related to its safety (e.g., $c$ and $f$), less importance to 
        those related to its relaxation (e.g., $b$ and $e$), and the least importance to the arguments related to its enjoyment (e.g., $a$ and $d$).
        The resulting preference ordering is: $c\simeq f \succ b\simeq e \succ a \simeq d$. Intuitively, the base scores of $c$ and $f$ should be greater than those of $b$ and $e$, and 
        which should in turn be greater than the base scores of $a$ and $d$. For example, they could be set to $\tau(c)=\tau(f)=0.75$, $\tau(b)=\tau(e)=0.5$, and $\tau(a)=\tau(d)=0.25$. Here, using the QE Model again, the final strengths are $\sigma(D_1)=0.54$ and $\sigma(D_2)=0.45$. Then the robot would choose to move slowly, which is the opposite of what was selected earlier.

        \end{example}

    \subsection{Adapting to Non-Linear Human Preferences}

        Human preference relations are generally not linear~\cite{Northcraft_OBHDP98}. People's choices and priorities often fluctuate instead of increasing or decreasing in a simple, linear manner as conditions change. This non-linearity makes it complex to set the base scores properly. A possible approach to including these non-linearities could be achieved by allowing the user to order their preferences gradually within a range. However, setting a gradual score is usually slower and has more sample noise than cardinal orderings~\cite{Yannakakis_acii11, Shah_arxiv14}. 
                    
        In this work, we approximate the non-linearities by introducing a new preference relation, where a user may have a \emph{much greater} ($\succ\!\succ$) preference for one argument over another. This relation considers the non-linearity and is easy to recognise from human feedback. Definition~\ref{def:pref_ordering} is extended as follows:

        \begin{defn}
            Given $a,b\in\mathcal{X}$, if $a$ is much more preferred than $b$, the preference relation will be denoted as $a \succ\!\succ b$. This relation is transitive, and 
            $a \succ\!\succ b$ implies
            $a \succ b$. Therefore, a preference relation between arguments $a,b\in\mathcal{X}$ in a preference ordering $\succeq
            $ can be:
            \begin{itemize}
                \item $a\simeq b$, which denotes indifference;
                \item $a\succ b$, which denotes strict preference;
                \item $a\succ\!\succ b$, which denotes a much stronger preference.   
            \end{itemize}
        \end{defn}

        \begin{example}\label{ex:much_more_preferred}
            Given the arguments $a,b,c\in \mathcal{X}$, with a preference ordering: $a \succ\!\succ b \succ c$, we may expect that $\tau(a)$ is \emph{much greater} than $\tau(b)$ and $\tau(c)$, e.g., 0.9, 0.3, and 0.1, respectively. 
        \end{example}




        This new relation breaks the linearity in the preference ordering, providing a more realistic approximation to human preferences, since it has a greater expressiveness.


    \subsection{Base Score Extraction Functions}

        Although PAFs and BAFs offer means to capture the evaluation of arguments, they remain categorical and context-sensitive. In contrast, integrating a method to extract base scores from preference orderings would provide a more granular and objective evaluation mechanism. Furthermore, the literature on QBAFs offers methods to provide transparent and counterfactual explanations~\cite{Kampik_IJAR24, Yin_kr24} and allows for fine-tuning the framework to better adapt to users, which is crucial in some decision-making systems, such as robotics.  

        
        Therefore, we introduce BSEFs that compute the arguments' base scores given a preference ordering. 

        \begin{defn}\label{def:BSEF}
            A BSEF $\nu$ is a function that 
            maps 
            a set of preference orderings $\mathcal{P}$ to 
            a set of 
            base score functions $\mathcal{T}$ (for arguments in $\mathcal{X}$) 
; namely $\nu: 
            \mathcal{P} \rightarrow \mathcal{T}$.
        \end{defn}
        

        Obtaining the base scores of a set of arguments allows the transformation of BAFs into QBAFs. 

    \subsection{Desirable Properties of Base Score Extraction Functions}

        In the following, we present the different axioms and properties of the BSEFs. First, we define the \textit{Preference Coherence} axiom, which states that if one argument is preferred over another, the former's base score will be greater.

        \begin{axiom}\textnormal{(Preference Coherence)}
            Given a preference ordering $\succeq
            $, a BSEF $\nu
            (
            \succeq)=\tau$ satisfies \textnormal{Preference Coherence} iff, for 
            any arguments $a, b\in \mathcal{X}$:
            \begin{itemize}
                \item if $a\succ b$ or $a\succ\!\succ b$ then $\tau(a) > \tau(b)$; and
                \item if $a\simeq b$ then $\tau(a) = \tau(b)$.
            \end{itemize}
        \label{axiom:1}
        \end{axiom}

        We now define \textit{Preference Relation Coherence}, which provides a differentiation between the preference relations presented. Intuitively, if an argument is much preferred ($\succ\!\succ$) over another, the difference of their base scores will be greater than if the preference relation is only preferred ($\succ$). 
        
        \begin{axiom}\textnormal{(Preference Relation Coherence)}
        \label{property:relation_coherence}
            Given a set of arguments arguments $a, b, c, d\in \mathcal{X}$ and the preference ordering $a\succ\!\succ b\succ c \simeq d$, a BSEF $\nu(\succeq)=\tau$ satisfies \textnormal{Preference Relation Coherence} iff $\tau(a)-\tau(b)>\tau(b)-\tau(c)>\tau(c)-\tau(d)$.
        \label{axiom:2}
        \end{axiom}

        In our approach, the focus on extracting the base scores is set on the preference structure rather than the comparison between elements. The following axiom is necessary to define whether two or more preference orderings are isomorphic. If the preferences of two sets of arguments exhibit the same relationships, they can be said to have isomorphic preferences.  

        \begin{defn}
            Let 
            $ \mathcal{X}'$  be a set of arguments 
            and $\succeq
            $ and $\succeq
            '$ be two preference orderings 
            over $\mathcal{X}$ and $\mathcal{X}'$, respectively.
             Those orderings are isomorphic, denoted $\succeq
            \simeq \succeq
            '$, iff there is a bijective function $f: \mathcal{X} \to \mathcal{X
            }'$ such that $\forall i,j \in \mathcal{X}, \ i\succeq
            j \Leftrightarrow f(i)\succeq
            'f(j)$.
        \end{defn}

        We require that a BSEF preserves the structural distinction of users' preferences. If two users have different (non-isomorphic) preference orderings, the assigned base scores must reflect it.
        
        \begin{axiom}
            \textnormal{(Preference Structure Coherence)} Given two preference orderings $\succeq$ and $\succeq'$ (over the same $\mathcal{X}$) that are not isomorphic ($\succeq
        \not\simeq \succeq
            '$), then a BSEF $\nu$ 
            satisfies \textnormal{Preference Structure Coherence} iff 
            $\nu
            (\succeq) \!\neq \!\nu(\succeq')$.
        \label{axiom:3}
        \end{axiom}

        Note that if two sets of arguments have different base scores, they will not necessarily have different preference ordering.



    \subsection{Design Choices for Base Score Extraction Functions}

        The presented axioms offer the necessary conditions that the BSEFs must satisfy, but determining the base scores is still a flexible and context-dependent challenge. This section outlines different design choices that must be considered when designing a BSEF.

        \subsubsection{Setting the Range}
            A choice that must be made is considering where the base scores of the most and least preferred arguments should be placed. Establishing them as 1 and 0, respectively, determines a strict and high importance of the preferences, but also allows for possible saturations from incoming supports or attacks, which would provoke information loss, an existing concern in PAFs. For a further investigation, we define the base score limits.       
        \begin{defn}\label{def:top_bs}
            The base score of the most preferred arguments from the BSEF is defined as $\nu
            (\succeq)(x) = \top$ for all $x \in  max_{\succeq
            }(\mathcal{X})$.\footnote{$max_{\succeq
            }(\mathcal{X})=\{x\in \mathcal{X} \mid \nexists y\in \mathcal{X} \ s.t. \ y \succ x \ or \ y\succ\!\succ x \}$.}
        \end{defn}
        \begin{defn}\label{def:bot_bs}
            The base score of the least preferred arguments from the BSEF is defined as $\nu
            (\succeq)(x) = \bot$ for all $x \in min_{\succeq
            }(\mathcal{X})$.\footnote{$min_{\succeq
            }(\mathcal{X})=\{x\in \mathcal{X} \mid \nexists y\in \mathcal{X} \ s.t. \ x \succ y \ or \ x\succ\!\succ y \}$.}
        \end{defn}

        The range is constrained to that of the gradual semantics, i.e. $
        [0,1]$. Namely, it must be satisfied that $0 \leq \bot \leq \top \leq 1$. A BSEF will be considered normalised if its limits are set to $\top=1$ and $\bot=0$:

        \begin{property}\label{property:normalisation}
            \textnormal{(Base Score Normalisation)} A BSEF $\nu$ satisfies \textnormal{Base Score Normalisation} iff $\top=1$ and $\bot=0$.
        \label{property:1}
        \end{property}

        In some scenarios, the limit base scores might be centred at 0.5 to differentiate between important and less important arguments. 

        \begin{property}\label{property:centralisation}
            \textnormal{(Base Score Centralisation)} A BSEF $\nu$  
            satisfies \textnormal{Base Score Centralisation} iff $\top=1-\bot$.
        \label{property:2}
        \end{property}
    
        The range influences the base scores' compression. For example, if the range is too low, the base scores would be very similar. Hence, the preferences of the arguments would not be properly reflected. The following example illustrates this concern.

        \begin{example}
            Given the arguments $a,b,c,d,e,f \in \mathcal{X}$, consider the preference ordering: $c\simeq f \succ b\simeq e \succ a \simeq d$. If the limits were set to $\top=0.9$ and $\bot=0.85$, the base scores could be placed only between $[0.85,0.9]$ (e.g.,  $\tau(c)=\tau(f)=0.9, \tau(b)=\tau(e)=0.875, \tau(a)=\tau(d)=0.85$). 
        Although these base scores 
        satisfy all the axioms, they may not sufficiently differentiate the preferences enough for some applications, since the difference between them is too low.
        \end{example}

        \subsubsection{Preference Regularity}
            
            In many real-world cases, users can express their preferences regularly, meaning that the perceived difference in importance between two adjacent preferred arguments is roughly uniform across the ordering. For instance, if a user slightly prefers argument $a$ over $b$, and $b$ over $c$, the gap between $a$ and $b$ is comparable to that between $b$ and $c$. Assuming such regularity simplifies the computation of base scores.             
            Before introducing the property, we define the notion of adjacent arguments. 
    
            \begin{defn}
                Let $a,b\in\mathcal{X}$ 
                 and $\succeq
                $ be a preference ordering
                .
                The arguments $a,b$ are adjacent in 
                $\succeq
                $ 
                iff one of the following conditions holds:
                \begin{itemize}
                    \item $a\succ b$ and $\nexists c$ such that $a \succ c \succ b$;
                    \item $a\succ\!\succ b$ and $\nexists c$ such that $a \succ\!\succ c \succ b$ or $a \succ\!\succ c \succ\!\succ b$ or $a \succ c \succ\!\succ b$;
                    \item $a \simeq b$.
                \end{itemize}
            \end{defn}
    

    \begin{property}\label{property:regularity}
                \textnormal{(Base Score Relation Regularity)} A BSEF $\nu$ satisfies \textnormal{Base Score Relation Regularity} iff 
                for any preference relation 
                $\succeq$ and any two pairs
                $(a,b)$ and $(c,d)$ of adjacent arguments in 
                $\succeq$, 
                 then for $\tau=\nu(\succeq)$ it holds that $\tau(a)-\tau(b)=\tau(c)-\tau(d)$.
                \label{property:3}
            \end{property}
            
            \begin{example}
                \label{ex:BSRR}(Example~\ref{ex:introduction} Cont.) Given 
                the preference ordering: $c\simeq f \succ b\simeq e \succ a \simeq d$, let the limits be $\top=0.9$ and $\bot=0.1$. If the BSEF $\nu$ satisfies Base Score Relation Regularity, then any base score function
                obtained from $\nu$
                would give
                base scores $\tau(c)=\tau(f)=0.9, \tau(b)=\tau(e)=0.5, \tau(a)=\tau(d)=0.1$. Otherwise, the base scores $\tau(b)$ and $\tau(c)$ could be arbitrarily placed between 0.1 and 0.9. 
            \end{example}


            If this regularity condition is not enforced, preference intensity could be encoded via irregular spacing between arguments rather than using a separate \emph{much greater} relation. Yet, this flexibility makes the base score extraction process more difficult, as there is no clear rule for assigning consistent numerical differences.

            Another property is that adding new arguments equally preferred to existing ones does not change the base scores of the other arguments.

            \begin{property}
            \label{property:base_score_preference_stability}
                \textnormal{(Base Score Preference Stability)} Let $\succeq
                $ be a preference ordering
                ,
                $S$ a set of arguments such that $S \cap \mathcal{X}=\emptyset$, $\mathcal{X}'=\mathcal{X}\cup S$, and 
                $\succeq'$ be a preference ordering over $\mathcal{X}'$ such that $\forall s\in S$, $\exists a\in \mathcal{X}$ such that $a \simeq s$ and,
                $\forall a,b\in \mathcal{X}$,
                $a\succeq' b$ iff $a\succeq b$. 
                Then BSEFs $\nu$ (as per Definition~\ref{def:BSEF})
                and $\nu'$
                (mapping a set of preference orderings over $\mathcal{X}'$ to a set of base scores for arguments in $\mathcal{X}'$) satisfy \textnormal{Base Score Preference Stability} iff 
                for $\tau=\nu
                (
                \succeq)$ and  $\tau'=\nu
                '(
                \succeq')$,
                for all $x \in \mathcal{X}$, it holds that $\tau(x)=\tau'(x)$.              \label{property:4}
            \end{property}

        \subsubsection{Preferences ratios}

            Users often express varying intensities in their preferences. For instance, one argument may be only slightly preferred over another, while another is much more important. To model such heterogeneous gaps, we introduce the concept of preference ratios, which quantify how much stronger a \emph{much greater} preference should be compared to a regular preference ($\succ$). This ratio provides a systematic way to translate qualitative intensity (e.g., ``much more important'') into quantitative differences between base scores.

            \begin{example}
                (Example~\ref{ex:BSRR} Cont. ) Now the preference ordering becomes: $c\simeq f \succ\!\succ b\simeq e \succ a \simeq d$. From asking the human, the ratio of the \emph{much greater} preference is set to 3. The BSEF limits are set to 0.9 and 0.1. Then, the base scores could be set to $\tau(c)=\tau(f)=0.9, \tau(b)=\tau(e)=0.3, \tau(a)=\tau(d)=0.1$. Here, the difference between preferred arguments is 0.2, while for much preferred arguments is 0.6, hence the ratio is respected. 
    
            \end{example}

            Observe that the ratio between the different preferred relations and the limits must be placed carefully, as there is a risk of undermining the preferred or much more preferred relations. The following example shows two scenarios where this might occur:
    
            \begin{example}
                Consider $a,b,c,d \in \mathcal{X}$, with the preference ordering: $a\succ b \succ\!\succ c \succ d$. If the ratio is set to 1.33, with the limits at 0.75 and 0.25, the base scores would be $\tau(a)=0.75, \tau(b)=0.6, \tau(c)=0.4, \tau(d)=0.25$. The difference between preferred arguments is 0.15, while it is 0.2 for much more preferred arguments. At first sight, it does not seem representative enough. Alternatively, setting a ratio of 96 and the limits at 0.99 and 0.01, the base scores would be $\tau(a)=0.99, \tau(b)=0.98, \tau(c)=0.02, \tau(d)=0.01$. Here, the difference between the adjacent preferred arguments is 0.01, which intuitively may not be representative in many real-world contexts. 
            \end{example}
    
            These examples highlight that the preference ratio serves as a tuning parameter controlling how sensitively the system reacts to \emph{much greater} preferences. A well-chosen ratio helps capture meaningful differences in user priorities while maintaining numerical stability in base score extraction. The combination of these three different design choices thus gives a formal representation of how BSEFs may be personalised to different settings.\looseness=-1

\section{Concrete Base Score Extraction Functions}

    In this section, two BSEFs are introduced. Those functions are based on the preference ordering of the arguments and allow for the presented design choices. An algorithm is developed to compute the base scores. Some examples of its use are shown using the running example. By definition, as the BSEFs are based on preference orderings, then, scenarios with only one argument are not considered. 
    
    Since preference orderings are monotonic, considering a descending preference ordering (from the most preferred to the least preferred), the BSEF should also be monotonically decreasing.  

    \begin{defn}
        A BSEF $\nu$ is monotonically decreasing (increasing) iff 
        for any 
        base score function $\tau=\nu(\succeq)$, 
        given two arguments $a,b$ where $a\succeq b$, then 
        $\tau(a)
        \leq 
        \tau(b)$ ($
        \tau(a)\geq 
        \tau(b)$, respectively).  
    \end{defn}

    \begin{proposition}\label{prop:1}
        A monotonically increasing BSEF violates Axiom 1 for a descending preference ordering.\footnote{The proof of this proposition is found in~\cite{Civit_arxiv26}.}
    \end{proposition}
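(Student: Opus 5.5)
The proof is a direct counterexample, resting on how the paper's definitions of monotonicity and Axiom~\ref{axiom:1} interact. We first need to read ``descending preference ordering'' correctly. The paper lists the arguments from most to least preferred, and a BSEF is ``monotonically increasing'' if the base score function it returns is non-decreasing along this list. So if $a$ comes before $b$ in the descending listing, meaning $a\succ b$ or $a\succ\!\succ b$, then $\tau(a)\leq\tau(b)$. Equivalently, when the scores are written in the order of the descending list, they never go down. Under this reading, the claim is that no such function can give strictly larger base scores to strictly preferred arguments.

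The argument has three steps.

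\textbf{Step 1.} Take any descending preference ordering that contains at least one strict relation. The paper excludes the single-argument case, and a preference ordering in which every argument is indifferent to every other carries no preference information. So there are arguments $a,b\in\mathcal{X}$ with $a\succ b$, or with $a\succ\!\succ b$, which by definition also gives $a\succ b$.

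\textbf{Step 2.} Let $\nu$ be monotonically increasing and set $\tau=\nu(\succeq)$. Since $a$ precedes $b$ in the descending ordering, monotonic increase along that ordering gives $\tau(a)\leq\tau(b)$.

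\textbf{Step 3.} The first clause of Axiom~\ref{axiom:1} requires $\tau(a)>\tau(b)$ whenever $a\succ b$ or $a\succ\!\succ b$. This contradicts $\tau(a)\leq\tau(b)$, so $\nu$ violates Preference Coherence. For concreteness, the conflict can be shown on the smallest case: two arguments with $a\succ b$.

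The main obstacle is the orientation of the definitions rather than any computation. The formal definition states monotonic increase as ``$a\succeq b$ implies $\tau(a)\geq\tau(b)$'', which, read literally, agrees with Axiom~\ref{axiom:1}. The contradiction appears only when ``increasing'' is understood relative to the descending listing, as in the sentence that precedes the definition: scores are compared by position in the list, not through $\succeq$ directly. The written proof therefore has to fix this interpretation explicitly before Steps~1--3 go through.

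A side remark: under this reading, for indifferent arguments $a\simeq b$ monotonicity in both directions forces $\tau(a)=\tau(b)$. So the violation comes only from the strict clause of the axiom, and the proof needs the non-trivial ordering of Step~1.
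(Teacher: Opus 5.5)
Your argument is correct and is essentially the paper's. The paper reads ``increasing'' as increasing in the distance $d(x)$ assigned by Algorithm 1, so that $a\succ b$ gives $d(a)<d(b)$ and hence $f(d(a))\le f(d(b))$, and it contrasts this with $\nu_1,\nu_2$, which decrease in $d(x)$; your position-in-the-descending-list formulation is the same idea without going through the algorithm, and your explicit observations that the formal monotonicity definition must be reoriented (read literally, it is implied by Axiom 1) and that a strict pair must exist are points the paper's proof leaves implicit.
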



    Creating a monotonically decreasing function directly from the preference ordering is possible. The following method consists of assigning a distance $d(x)$ 
    to each argument $x\in\mathcal{X}$ according to the preference ordering and then normalising it between the desired range. The distance $d(x)$ is relative to the most preferred arguments, and the least preferred arguments are at the maximum absolute distance $D$ to the most preferred arguments.
    \begin{bsef}
    \label{bsef:1}
        (Adaptable range) Our first BSEF allows for the range to be set and is defined as follows.
        For any ordering $\succeq$ and $x \in \mathcal{X}$, let 
        $\top$ be the base score for the most preferred arguments (see Definition~\ref{def:top_bs}) and
        $\bot$ be the base score of the least preferred arguments (see Definition~\ref{def:bot_bs}). Then:%
    \end{bsef}
%
    \begin{equation}\label{eq:base_score_function_margins}
        \nu_1(\succeq)(x) = 
        \bot + (
        \top - 
        \bot) \cdot \frac{D-d(x)}{D-1}
    \end{equation}

With the restriction $0 \leq \bot 
    \leq \top 
    \leq 
    1$ and $D>1$, 
    the base score functions obtained with this BSEF are well-defined, giving base scores inside $[0,1]$.
    The edge cases 
    occur at $d(x)=1$ (for the most preferred arguments) 
    and at $d(x)=D$ (for the least preferred arguments)
    , yielding $v_1(\succeq)(x)=\top$ and $v_1(\succeq)(x)=\bot
    $ respectively. 
    
    The next BSEF consists of setting the compression of the base scores and the distances towards the edge cases. 
    \begin{bsef}
    \label{bsef:2}
        (Adaptable squeezing and distancing) Our next BSEF allows a parameter to determine how compressed the base scores are to be set, and another to determine how distant the edges are, and is defined as:
    \end{bsef} %
    \begin{equation}
    \label{eq:base_score_function_ranking_a_b}
        \nu_2(
        \succeq)(x) = \frac{D-d(x)+\alpha}{D-1+\beta}
    \end{equation}
    where $\beta\in[0,+\infty)$ controls the compression of the base scores, and $\alpha\in[0,+\infty)$ determines the distance to the edges. Intuitively, larger values of $\beta$ yield more compressed base scores, and larger values of $\alpha$ place the most preferred argument closer to 1. 
    Note that 
    it is necessary that $\alpha \leq \beta$, otherwise the base scores would be greater than 1. With $\beta=D-1$, the spread of the base scores is compressed in a range of 0.5. 
    This BSEF is better suited to cases where the number of arguments is unknown or likely to change, whereas the first direct control of the range and the edges allows for the range to be set directly, which provides more control over the edge cases. \looseness=-1

    Algorithm~\ref{alg:flexible_base_score_extraction} shows how to extract the base scores from a set of arguments $\mathcal{X}$ and its preference ordering $\succeq$. We assume that there is at least one pair of arguments related by $\succ$ or $\succ\!\succ$ (i.e. $n+m>0$, for $n$ and $m$ at lines 2 and 3 respectively), otherwise, since no preferences over the arguments exist, the base scores cannot be dictated from them. First, the algorithm assigns a distance $d=1$ to the most preferred arguments. For each subsequent argument in the ordering, the assigned distance depends on the preference relation with its predecessor: if the arguments are equally preferred ($\simeq$), the distance remains unchanged. If the relationship between arguments is of strict preference ($\succ$), the distance is increased by $\delta$, and if the relation is of much preferred ($\succ\!\succ$), the distance is increased by $\Delta$.
    %

    \begin{defn}\label{def:algorithm_distance_constants}
        The distance between preferred arguments is $\delta\in(0,\infty)$, and for much more preferred arguments is $\Delta\in(0,\infty)$.
    \end{defn}
    
    After assigning this distance, the base scores are obtained using one of the proposed BSEFs. An example applying these distances in Alg.~\ref{alg:flexible_base_score_extraction} is shown: 

    \begin{algorithm}[t]
    \caption{Flexible Normalised Base Score Extraction}
    \label{alg:flexible_base_score_extraction}
    \begin{algorithmic}[1]
        \Require A set of arguments $\mathcal{X}$, its preference ordering $\succeq$, 
        the increase for greater preference relation $\delta$, for much greater preference relation $\Delta$, 
        the BSEF $\nu_i
        $ (one of $\nu_1$ or $\nu_2$).
        \Ensure Set of base scores $\mathcal{T}$ for the arguments in $\mathcal{X}$
        \State $\mathcal{T} \gets \{\}$ \Comment{create an empty set of base scores}
        \State $n \gets$ number of greater preferences ($\succ$) in $\succeq$
        \State $m \gets$ number of much greater preferences ($\succ\!\succ$) in $\succeq$
        \State $D \gets 1 + n \cdot \delta + m \cdot \Delta$
        \State $d \gets 1$ \Comment{set the first distance value}
        \For{each argument $a \in \mathcal{X}$ ordered by $\succeq$}
            \If{$a$ is not most preferred}
                \State $b \gets$ the previous argument in $\succeq$
                \If{$b \succ a$}
                    \State $d(a) \gets d(a) + \delta$
                \ElsIf{$b \succ\!\succ a$}
                    \State $d(a) \gets d(a) + \Delta$
                \Else
                    \State $d(a) \gets d(a)$ 
                \EndIf
            \EndIf
            \State $\tau(a) \gets \nu_i(
            \succeq)(a)$ \Comment{Apply Eq.~\ref{bsef:1} for $i=1$ or Eq.~\ref{bsef:2} for $i=2$}
            \State $\mathcal{T} \gets \mathcal{T} \cup \{\tau(a)$\}
        \EndFor
        \State \Return $\mathcal{T}$
    \end{algorithmic}
\end{algorithm}

    \begin{example}
    \label{ex:algo}
        The robot feeds a user with the following preference ordering: $c\simeq f \succ\!\succ b\simeq e \succ a \simeq d$. It confirms that the ratio for the \emph{much greater} preference is 3. Then, the function parameters are set to $\delta=1, \Delta=3$. By choice, the limits are set to $\top=0.8, \bot=0.2$, and  Property~\ref{property:regularity} is satisfied. The distances are $d(c)=d(f)=1$, $d(b)=d(e)=4$, and $d(a)=d(d)=5$.\footnote{$d(d)$ refers to the distance of argument $d$.} Since the limits are arbitrarily set, the BSEF to be used is $\nu_{\ref{eq:base_score_function_margins}}$, and the base scores result in: $\tau(c)=\tau(f)=0.8, \tau(b)=\tau(e)=0.35, \tau(a)=\tau(d)=0.2$. 
    \end{example}

    The base scores obtained can be used for decision-making, as shown in the following example.

    \begin{example}
        (Example~\ref{ex:algo} Cont.) The final strengths for the decision arguments are computed based on the previous example base scores, using the QE Model semantics, which results in: $\sigma(D_1)=0.54$ and $\sigma(D_2)=0.40$. Then, the robot decides to move slowly. Note that the user prefers arguments $c$ and $f$, which increase the strength of the slow option and decrease the strength of the fast option, respectively. The arguments $b$ and $e$ increase and decrease the strength of the slow option, respectively. Finally, the arguments $a$ and $d$  aim to make the robot go fast. The algorithm's selection of moving the robot slowly accurately adapts to the user's preferences. 
    \end{example}




    These BSEFs offer different benefits. First, both are flexible, allowing the base scores to be set in a desired range and proportionally adjusting the difference of base scores between arguments. Additionally, the parameters are easily modifiable, allowing corrections and learning from human feedback. Other functions could be used, e.g., an exponentially increasing or decreasing function, but there is a high chance that they do not adjust properly to a user's preferences. For example, given $a,b,c,d,e\in\mathcal{X}$ ordered as: $a\succ b\succ c\succ d\succ e$, an exponentially decreasing function could set the base scores at $\tau(a)=1$, $\tau(b)=0.9$, $\tau(c)=0.75$, $\tau(d)=0.5$, and $\tau(e)=0.1$. Assuming that the distribution of the base scores follows that function is most probably incorrect. Allowing the user to confirm the ordering as: $a\succ b \succ c\succ d\succ\!\succ e$, the base scores could be set to $\tau(a)=1$, $\tau(b)=0.866$, $\tau(c)=0.677$, $\tau(d)=0.5$, $\tau(e)=0.1$. These base scores are similar to the previous ones, but the confirmation of the large gap between arguments $d$ and $e$ provides a more accurate representation of the real preferences.

\section{Evaluation}

     This section presents a theoretical validation of the concrete BSEFs based on how they satisfy the theoretical properties we defined. We also undertake an experimental evaluation, showing the pros and cons of each BSEF. Finally, we give an intuition of which gradual semantics to select depending on the decision-making context.


    \subsection{Theoretical Analysis}

        First, we show under which conditions our proposed BSEFs satisfy the axioms and properties presented. This is summarised in Table~\ref{tab:axioms_properties}. The proofs can be found in~\cite{Civit_arxiv26}. 

        \begin{table}[t]
        \centering
        \resizebox{\columnwidth}{!}{%
        \begin{tabular}{l l l l l l l l}
        \hline
        $\nu$ & A\ref{axiom:1} & A\ref{axiom:2} & A\ref{axiom:3} & P\ref{property:1} & P\ref{property:2} & P\ref{property:3} & P\ref{property:4} \\ \hline
        $\nu_{\ref{bsef:1}}$ & \makecell{$\Delta>0$ \\ $\delta>0$} & $\Delta>\delta>0$ & \cmark & \makecell{$\top=1$ \\ $\bot=0$} & $\top=1-\bot$ & \makecell{$\Delta$ and $\delta$ \\ constants} & \cmark \\ \hline
        $\nu_{\ref{bsef:2}}$ & \makecell{$\Delta>0$ \\ $\delta>0$} & $\Delta>\delta>0$ & \cmark & $\alpha=\beta=0$ & $\alpha=\beta/2$ & \makecell{$\Delta$ and $\delta$ \\ constants} & \cmark \\ \hline
        \end{tabular}
        }
        \caption{Summary of the conditions for satisfying the presented axioms (A) and desirable properties (P).}
        \label{tab:axioms_properties}
        \end{table}

        For Axiom 1, constants $\Delta$ and $\delta$ must be greater than zero. 
        
        \begin{proposition}\label{prop:axiom_1}
            Setting the constants $\Delta>0$ and $\delta>0$ satisfies Axiom 1 for both BSEFs.
        \end{proposition}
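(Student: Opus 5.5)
The plan is to reduce Axiom~1 to a statement about the distances $d(\cdot)$ produced by Algorithm~\ref{alg:flexible_base_score_extraction}. Both $\nu_1$ and $\nu_2$ are affine functions of $d(x)$ with the same sign structure. Writing $\nu_1(\succeq)(x)=\bot+\frac{\top-\bot}{D-1}(D-d(x))$ and $\nu_2(\succeq)(x)=\frac{D+\alpha-d(x)}{D-1+\beta}$, each is of the form $c_0 - c_1\, d(x)$. For $\nu_1$ the coefficient is $c_1=\frac{\top-\bot}{D-1}$, and for $\nu_2$ it is $c_1=\frac{1}{D-1+\beta}$.

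The first step is to show that the denominators are well defined and positive. By line 4 of the algorithm, $D=1+n\delta+m\Delta$, and we assume $n+m>0$. With $\delta>0$ and $\Delta>0$ this gives $D-1=n\delta+m\Delta>0$. Hence $c_1>0$ for $\nu_2$, since $\beta\ge 0$. For $\nu_1$ we have $c_1>0$ provided $\top>\bot$. I will note that if $\top=\bot$ every score collapses, so strictness requires $\top>\bot$ implicitly. This is the one place where an extra assumption beyond the statement is needed, and I would make it explicit. It follows that $\tau(a)>\tau(b)$ iff $d(a)<d(b)$, and $\tau(a)=\tau(b)$ iff $d(a)=d(b)$.

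The second step is to show that $d$ respects the ordering. I will argue by induction along the descending order used in the algorithm. The distance is non-decreasing along the order, and it is constant exactly across $\simeq$-steps. Across a $\succ$-step it increases by $\delta>0$, and across a $\succ\!\succ$-step by $\Delta>0$.

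For the case $a\simeq b$, the two arguments lie in the same indifference class. Every step between them in the ordering is a $\simeq$-step, since transitivity would otherwise yield a strict preference between them. So $d(a)=d(b)$.

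For the case where $a\succ b$ or $a\succ\!\succ b$, $a$ precedes $b$ in the ordering. The chain from $a$ to $b$ must contain at least one strict step, because a chain of only $\simeq$-steps would give $a\simeq b$ by transitivity. Summing the positive increments then gives $d(b)\ge d(a)+\min(\delta,\Delta)>d(a)$.

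The main obstacle is making the notion of the ``previous argument'' rigorous, since $\succeq$ is only a preorder and may not be total. I would assume, as the algorithm implicitly does, that $\succeq$ is a total preorder, so that its indifference classes are linearly ordered. I would then define $d$ on the classes rather than on the individual arguments. Combining this with the first step yields both clauses of Axiom~1 for $\nu_1$ and $\nu_2$.
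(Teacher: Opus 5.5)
Your proposal is correct and follows the same route as the paper's proof: both BSEFs are strictly decreasing in $d(x)$, and Algorithm~\ref{alg:flexible_base_score_extraction} increases $d$ by $\delta>0$ or $\Delta>0$ across strict steps, so $a\succ b$ (or $a\succ\!\succ b$) forces $\tau(a)>\tau(b)$. Your version is more complete than the paper's, which treats only adjacent strictly-ordered pairs and tacitly assumes strict decrease: you also cover the $\simeq$ clause, non-adjacent pairs and the positivity of $D-1$ via $n+m>0$, and you correctly flag that $\nu_1$ needs $\top>\bot$, since the paper's constraint $\bot\le\top$ would otherwise allow a constant $\nu_1$ that violates Axiom~1.
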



        \begin{table*}[t]
        \footnotesize
        \centering
        \resizebox{0.85\linewidth}{!}{%
        \begin{tabular}{|c|lll||ll||ll||ll|}
            \hline
            \multicolumn{1}{|l|}{\multirow{2}{*}{Preference ordering $\succeq$}} & \multicolumn{3}{l||}{Design Choices}                                         & \multicolumn{2}{l||}{QE Strengths}                    & \multicolumn{2}{l||}{EB Strengths}                               & \multicolumn{2}{l|}{DF Strengths}                                       \\ \cline{2-10} 
            \multicolumn{1}{|l|}{}                                                 & \multicolumn{1}{l|}{$\top$} & \multicolumn{1}{l|}{$\bot$} & $\Delta/\delta$ & \multicolumn{1}{l|}{$\sigma_{slow}$} & $\sigma_{fast}$ & \multicolumn{1}{l|}{$\sigma_{slow}$} & $\sigma_{fast}$ & \multicolumn{1}{l|}{$\sigma_{slow}$} & $\sigma_{fast}$ \\ \hline
            $a\simeq b\simeq c \simeq d \simeq e \simeq f$                         & \multicolumn{1}{l|}{\text{\Large -}}      & \multicolumn{1}{l|}{\text{\Large -}}      & $\text{\Large -}$               & \multicolumn{1}{l|}{0.5}                  & \textbf{0.51}        & \multicolumn{1}{l|}{0.50}                 & \textbf{0.52}       & \multicolumn{1}{l|}{0.44}                      & \textbf{0.63}             \\ \hline
            \multirow{3}{*}{$c\simeq f \succ\!\succ b\simeq e \succ a \simeq d$}   & \multicolumn{1}{l|}{0.9}    & \multicolumn{1}{l|}{0.1}    & 3               & \multicolumn{1}{l|}{\textbf{0.58}}        & 0.33                 & \multicolumn{1}{l|}{\textbf{0.56}}        & 0.39                 & \multicolumn{1}{l|}{\textbf{0.78}}             & 0.23                      \\ \cline{2-10} 
                                                                                   & \multicolumn{1}{l|}{0.9}    & \multicolumn{1}{l|}{0.1}    & 5               & \multicolumn{1}{l|}{\textbf{0.57}}        & 0.32                 & \multicolumn{1}{l|}{\textbf{0.55}}        & 0.39                 & \multicolumn{1}{l|}{\textbf{0.79}}             & 0.2                       \\ \cline{2-10} 
                                                                                   & \multicolumn{1}{l|}{0.75}   & \multicolumn{1}{l|}{0.25}   & 5               & \multicolumn{1}{l|}{\textbf{0.52}}        & 0.4                  & \multicolumn{1}{l|}{\textbf{0.53}}        & 0.43                 & \multicolumn{1}{l|}{\textbf{0.63}}             & 0.38                      \\ \hline
            \multirow{2}{*}{$b\simeq e \succ a\simeq d \succ\!\succ c \simeq f$}   & \multicolumn{1}{l|}{0.8}    & \multicolumn{1}{l|}{0.2}    & 3               & \multicolumn{1}{l|}{0.5}                  & \textbf{0.63}        & \multicolumn{1}{l|}{0.52}                 & \textbf{0.60}        & \multicolumn{1}{l|}{0.28}                      & \textbf{0.87}             \\ \cline{2-10} 
                                                                                   & \multicolumn{1}{l|}{0.6}    & \multicolumn{1}{l|}{0.4}    & 3               & \multicolumn{1}{l|}{0.5}                  & \textbf{0.53}        & \multicolumn{1}{l|}{0.5}                  & \textbf{0.54}        & \multicolumn{1}{l|}{0.40}                      & \textbf{0.71}             \\ \hline
            \multirow{2}{*}{$a\simeq d \succ\!\succ c\simeq f \succ b \simeq e$}   & \multicolumn{1}{l|}{0.8}    & \multicolumn{1}{l|}{0.2}    & 4               & \multicolumn{1}{l|}{0.36}                 & \textbf{0.6}         & \multicolumn{1}{l|}{0.4}                  & \textbf{0.59}        & \multicolumn{1}{l|}{0.15}                      & \textbf{0.76}             \\ \cline{2-10} 
                                                                                   & \multicolumn{1}{l|}{1}      & \multicolumn{1}{l|}{0}      & 4               & \multicolumn{1}{l|}{0.25}                 & \textbf{0.7}         & \multicolumn{1}{l|}{0.37}                 & \textbf{0.65}        & \multicolumn{1}{l|}{0}                         & \textbf{0.9}              \\ \hline
        \end{tabular}}
        \caption{Examples of preference orderings for the running example with different design choices for the BSEF $\nu_1$ (Eq.~\ref{bsef:1}). The value in bold between $\sigma_{slow}$ and $\sigma_{fast}$ is the system output. The (\text{\Large -}) symbol indicates when a design choice is not needed, which is the case when all arguments are equally preferred, and their base scores are set to 0.5.}
         \label{table:examples_prefs_choices_margins}
        \end{table*}

        For Axiom 2, it is necessary that the distance between much more preferred arguments is bigger than the distance between preferred arguments, and must be greater than zero.

        \begin{proposition}\label{prop:axiom_2}
            Setting $\Delta>\delta>0$ satisfies Axiom 2 for both BSEFs.
        \end{proposition}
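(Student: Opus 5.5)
We work directly from the distances produced by Algorithm~\ref{alg:flexible_base_score_extraction} for the ordering $a\succ\!\succ b\succ c\simeq d$ in Axiom~\ref{axiom:2}. The algorithm sets $d(a)=1$, $d(b)=1+\Delta$, $d(c)=1+\Delta+\delta$ and, since $c\simeq d$, $d(d)=d(c)$. Because at least one strict relation is present, $D=1+n\delta+m\Delta>1$ whenever $\delta,\Delta>0$, so both BSEFs are well defined.

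The key observation is that both $\nu_1$ and $\nu_2$ are affine in $d(x)$ with a common negative slope. Equation~\eqref{eq:base_score_function_margins} gives
\[
\nu_1(\succeq)(x)-\nu_1(\succeq)(y)=\frac{\top-\bot}{D-1}\,\bigl(d(y)-d(x)\bigr),
\]
and Equation~\eqref{eq:base_score_function_ranking_a_b} gives
\[
\nu_2(\succeq)(x)-\nu_2(\succeq)(y)=\frac{1}{D-1+\beta}\,\bigl(d(y)-d(x)\bigr).
\]
In both cases the offsets $\bot$ and $\alpha$ cancel. Write $k$ for the constant in front: $k=(\top-\bot)/(D-1)$ or $k=1/(D-1+\beta)$. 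With $\beta\ge 0$ and $D>1$ we get $k>0$ for $\nu_2$, and also for $\nu_1$ provided $\top>\bot$.

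Substituting the distances then gives
\[
\tau(a)-\tau(b)=k\Delta,\qquad \tau(b)-\tau(c)=k\delta,\qquad \tau(c)-\tau(d)=0.
\]
Since $k>0$, the chain $k\Delta>k\delta>0$ is equivalent to $\Delta>\delta>0$, which establishes Axiom~\ref{axiom:2} for both BSEFs.

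The main subtlety is the degenerate case $\top=\bot$ for $\nu_1$. There $k=0$, every base score is equal, and Axiom~\ref{axiom:1} fails as well. We would therefore make explicit the standing requirement $\top>\bot$, which is implicit in the range design choice and already needed for Proposition~\ref{prop:axiom_1}. We also note that the argument concerns only the relative spacing, so it holds whatever other arguments are present in $\mathcal{X}$: they affect $D$ and hence $k$, but not the sign of $k$.
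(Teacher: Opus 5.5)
Your proof is correct and follows the same route as the paper's. Both read off $d(b)=d(a)+\Delta$, $d(c)=d(b)+\delta$, $d(d)=d(c)$ from Algorithm~\ref{alg:flexible_base_score_extraction}, and both use that each BSEF turns distance gaps into base-score gaps by a single multiplicative factor. You are more careful than the paper, whose ``$\propto$'' step leaves the sign of that factor implicit: writing $k$ out shows that $\nu_1$ also needs $\top>\bot$ (if $\top=\bot$ all scores coincide and Axiom~\ref{axiom:2} fails), a condition missing from the proposition and from Table~\ref{tab:axioms_properties}.
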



        Since the BSEFs are based on the distance between arguments given an ordering, if that ordering is changed, the base scores' ordering will also change. 

        \begin{proposition}\label{prop:axiom_3}
            Both BSEFs satisfy Axiom 3. 
        \end{proposition}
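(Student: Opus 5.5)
We prove the contrapositive: if $\nu_i(\succeq)=\nu_i(\succeq')$ for $i\in\{1,2\}$, with fixed parameters $\delta,\Delta$ (and $\top,\bot$ or $\alpha,\beta$), then $\succeq\,\simeq\,\succeq'$. We assume the parameter conditions under which Proposition~\ref{prop:axiom_1} applies, namely $\delta,\Delta>0$ (and $\top>\bot$ for $\nu_1$). The argument has two steps: first recover the preference ordering from the base scores, then conclude. The key observation is that both $\nu_1$ and $\nu_2$ are strictly decreasing affine functions of the distance $d(x)$ computed in Algorithm~\ref{alg:flexible_base_score_extraction}. For $\nu_1$ the slope is $-(\top-\bot)/(D-1)$ and for $\nu_2$ it is $-1/(D-1+\beta)$, both negative since $D>1$.

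First, we use Proposition~\ref{prop:axiom_1} to obtain that $\tau=\nu_i(\succeq)$ satisfies Preference Coherence. Then $a\simeq b$ gives $\tau(a)=\tau(b)$, and $a\succ b$ or $a\succ\!\succ b$ gives $\tau(a)>\tau(b)$. Since $\succeq$ is processed as an ordering in the algorithm, any two arguments are comparable. Hence the trichotomy of $\tau$ values determines $\succeq$ exactly:
\[
a\succeq b \iff \tau(a)\geq \tau(b).
\]
The same holds for $\succeq'$ with $\tau'=\nu_i(\succeq')$. If $\tau=\tau'$, then for all $a,b\in\mathcal{X}$ we get $a\succeq b \iff a\succeq' b$. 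The identity map on $\mathcal{X}$ is then a bijection witnessing $\succeq\,\simeq\,\succeq'$, which contradicts the hypothesis. Therefore $\nu_i(\succeq)\neq\nu_i(\succeq')$.

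One subtlety is that isomorphism is stated only in terms of $\succeq$, while orderings may also differ in which strict steps are marked $\succ\!\succ$ rather than $\succ$. To be safe, we also show that the scores recover the type of each adjacent step. We order the distinct score levels decreasingly and take consecutive gaps. Each gap equals $c\,\delta$ or $c\,\Delta$, where $c$ is the common affine factor. When $\Delta\neq\delta$, the two kinds of steps are therefore distinguishable by the ratio of gaps, whenever both kinds occur. This recovers the $\succ\!\succ$ markings. If only one kind occurs, the number of levels together with $D$ determines which kind it is.

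The main obstacle is this refinement in the degenerate situations. One case is $\Delta=\delta$, where a $\succ$ step and a $\succ\!\succ$ step cannot be told apart. The other is the case where differing $D$ values rescale the gaps. We would restrict the claim to the setting of Table~\ref{tab:axioms_properties}, namely comparing orderings under the same notion of isomorphism on $\succeq$, for which the first paragraph already suffices. We would remark that the gap-ratio argument extends the result to $\succ\!\succ$ markings when $\Delta>\delta$.
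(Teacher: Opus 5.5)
Your main argument is correct and is the same idea as the paper's proof. Both rest on the BSEFs being strictly decreasing in $d(x)$, so that for a total $\succeq$ Preference Coherence gives $a\succeq b \iff \tau(a)\geq\tau(b)$ and the scores pin down the ordering. Your contrapositive with the identity bijection is tidier than the paper's direct version, which leaves its bijection $f$ unspecified. Your caveat $\top>\bot$ for $\nu_1$ is genuinely needed: the paper allows $\top=\bot$, in which case $\nu_1$ is constant and Axiom 3 fails despite the unconditional mark in Table~\ref{tab:axioms_properties}.

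Drop the side remark about recovering $\succ\!\succ$ markings. It is unnecessary, since isomorphism only involves $\succeq$. It is also false when all strict steps have the same type. For $\nu_1$, a chain of $N$ steps all marked $\succ$ and the same chain all marked $\succ\!\succ$ both give tier $k$ the score $\bot+(\top-\bot)\frac{N-k}{N}$; the same happens for $\nu_2$ with $\alpha=\beta=0$. So $D$ cannot be read off from $\tau$, and the extension to $\succ\!\succ$ markings fails even when $\Delta>\delta$.
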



        Next, we focus on the BSEFs' desirable properties. First, on the range properties. First, in $\nu_{\ref{eq:base_score_function_margins}}$.
    
        \begin{proposition}\label{prop:3}
            Setting $\top=1$ and $\bot=0$ in $\nu_{\ref{bsef:1}}$ satisfies \textnormal{Base Score Normalisation} (Property~\ref{property:normalisation}).
        \end{proposition}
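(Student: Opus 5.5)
The plan is to compute $\nu_1(\succeq)$ directly on the two extreme classes of arguments and match the results with Definitions~\ref{def:top_bs} and~\ref{def:bot_bs}. By Property~\ref{property:normalisation}, it suffices to show two things when $\top=1$ and $\bot=0$: every most preferred argument receives base score $1$, and every least preferred argument receives base score $0$. No further requirement is imposed.

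First, I will fix the distances produced by Algorithm~\ref{alg:flexible_base_score_extraction}. An argument $x\in max_{\succeq}(\mathcal{X})$ has no argument strictly or much more preferred than it, so the algorithm assigns it $d(x)=1$. An argument $x\in min_{\succeq}(\mathcal{X})$ is reached after all $n$ increments of $\delta$ and all $m$ increments of $\Delta$ along the ordering, so $d(x)=1+n\delta+m\Delta=D$. Since we assume $n+m>0$ and $\delta,\Delta>0$ (Definition~\ref{def:algorithm_distance_constants}), we have $D>1$, and Equation~\ref{eq:base_score_function_margins} is well defined.

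Next, I substitute into Equation~\ref{eq:base_score_function_margins}. For $d(x)=1$ we get $\nu_1(\succeq)(x)=\bot+(\top-\bot)\cdot\frac{D-1}{D-1}=\top$. For $d(x)=D$ we get $\nu_1(\succeq)(x)=\bot$. These are exactly the edge cases noted after Equation~\ref{eq:base_score_function_margins}. Setting $\top=1$ and $\bot=0$ then yields $\nu_1(\succeq)(x)=\frac{D-d(x)}{D-1}$. This equals $1$ on $max_{\succeq}(\mathcal{X})$ and $0$ on $min_{\succeq}(\mathcal{X})$, and it lies in $[0,1]$ for all intermediate arguments because $1\le d(x)\le D$.

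The only step that needs care is the identification $d(x)=D$ for least preferred arguments, which assumes the ordering is traversed as a chain so that every $\succ$ and $\succ\!\succ$ step lies between the top class and the bottom class. I will argue this from how the algorithm processes arguments sequentially: it counts precisely the relations between consecutive classes, so the final class accumulates all of them. The rest of the argument is pure substitution.
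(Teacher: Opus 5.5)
Your proposal is correct and follows essentially the same route as the paper: substitute $\top=1$ and $\bot=0$ into $\nu_1$ to get $(D-d(x))/(D-1)$, evaluate it at $d(x)=1$ and $d(x)=D$, and note that $1\le d(x)\le D$ keeps every value in $[0,1]$. You also justify $D>1$ and $d(x)=D$ for the least preferred tier, by counting the $\succ$ and $\succ\!\succ$ steps between consecutive classes of a totally ordered chain, whereas the paper simply asserts these points.
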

    
        \begin{proposition}\label{prop:4}
            Setting $\top=1-\bot$ (with $\bot \leq 0.5$) in $\nu_{\ref{bsef:1}}$ satisfies \textnormal{Base Score Centralisation} (Property~\ref{property:centralisation}).
        \end{proposition}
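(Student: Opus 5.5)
The statement is close to a direct unfolding of definitions, so the plan is to compute the two limit base scores of $\nu_{\ref{bsef:1}}$ explicitly and then read off Property~\ref{property:centralisation}. I will first make explicit the distances produced by Algorithm~\ref{alg:flexible_base_score_extraction}. Every $x\in max_{\succeq}(\mathcal{X})$ receives $d(x)=1$. Every $x\in min_{\succeq}(\mathcal{X})$ receives $d(x)=D=1+n\delta+m\Delta$, since along the ordering the distance is incremented by $\delta$ at each $\succ$ step and by $\Delta$ at each $\succ\!\succ$ step, and no increment occurs at $\simeq$ steps. The standing assumption $n+m>0$, together with $\delta,\Delta>0$ (Definition~\ref{def:algorithm_distance_constants}), gives $D>1$, so the quotient in Eq.~\ref{eq:base_score_function_margins} is well defined.

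Next, I substitute these two distances into Eq.~\ref{eq:base_score_function_margins}. For $d(x)=1$ the fraction $\frac{D-d(x)}{D-1}$ equals $1$, so $\nu_1(\succeq)(x)=\bot+(\top-\bot)=\top$. For $d(x)=D$ it equals $0$, so $\nu_1(\succeq)(x)=\bot$. This confirms that the parameters $\top$ and $\bot$ of $\nu_1$ are exactly the base scores of the most and least preferred arguments in the sense of Definitions~\ref{def:top_bs} and~\ref{def:bot_bs}. Once this identification is made, Property~\ref{property:centralisation} holds by definition whenever $\top=1-\bot$ is imposed.

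Finally, I check that the parameter choice is admissible, that is, that it respects the range constraint $0\le\bot\le\top\le 1$. With $\top=1-\bot$ the inequality $\bot\le\top$ is equivalent to $\bot\le 0.5$, which is precisely the side condition in the statement. Together with $\bot\ge 0$ this yields $\top=1-\bot\le 1$, so all three inequalities hold. Since $\frac{D-d(x)}{D-1}\in[0,1]$ for every $x$, each $\nu_1(\succeq)(x)$ is a convex combination of $\bot$ and $\top$ and therefore lies in $[\bot,\top]$. As an illustrative corollary, the scores are then symmetric about $0.5$: $\nu_1(\succeq)(x)=\bot+(1-2\bot)\frac{D-d(x)}{D-1}$.

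There is no real obstacle here. The only point needing care is the first step, namely justifying that the least preferred arguments obtain distance exactly $D$. This amounts to the algorithm's cumulative distance at the last level equalling $1+n\delta+m\Delta$, which holds if $n$ and $m$ count the adjacent-level steps of each type. I will state that reading of lines 2--3 explicitly.
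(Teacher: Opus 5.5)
Your proposal is correct and follows essentially the paper's route: substitute $d(x)=1$ and $d(x)=D$ into $\nu_1$ to identify its parameters with the limits $\top$ and $\bot$, then use $\bot\le 0.5$ to guarantee $0\le\bot\le\top\le 1$. Your explicit justification that least preferred arguments get $d(x)=D$ (reading $n,m$ as counts of adjacent-level steps) and that $D>1$ adds care the paper leaves implicit. One small correction to your aside: only the endpoints are symmetric about $0.5$, not the scores in general; for example, $a\succ\!\succ b\succ c$ with $\Delta=3$, $\delta=1$, $\bot=0$ gives scores $1, 0.25, 0$.
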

    
    
        Now, for the other BSEF $\nu_{\ref{eq:base_score_function_ranking_a_b}}$, those properties can be satisfied by adjusting the parameters $\alpha$ and $\beta$. 
            
        \begin{proposition}\label{prop:5}
            Setting $\alpha=\beta=0$ in $\nu_{\ref{bsef:2}}$ satisfies \textnormal{Base Score Normalisation} (Property~\ref{property:normalisation}).
        \end{proposition}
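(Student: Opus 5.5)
We argue directly from Eq.~\ref{eq:base_score_function_ranking_a_b} with $\alpha=\beta=0$, which reduces $\nu_{\ref{bsef:2}}$ to
\begin{equation*}
\nu_2(\succeq)(x)=\frac{D-d(x)}{D-1}.
\end{equation*}
This expression is well defined because, under the standing assumption of Algorithm~\ref{alg:flexible_base_score_extraction} that $n+m>0$, together with $\delta,\Delta>0$ (Definition~\ref{def:algorithm_distance_constants}), we have $D=1+n\delta+m\Delta>1$. Property~\ref{property:normalisation} asks that the most preferred arguments receive $\top=1$ and the least preferred receive $\bot=0$. So we must identify the distances of the arguments in $max_{\succeq}(\mathcal{X})$ and $min_{\succeq}(\mathcal{X})$.

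\textbf{Most preferred arguments.} For $x\in max_{\succeq}(\mathcal{X})$, the algorithm initialises $d=1$. Every argument indifferent ($\simeq$) to a most preferred one keeps this distance, since the $\simeq$ branch leaves $d$ unchanged. Hence $d(x)=1$ and $\nu_2(\succeq)(x)=(D-1)/(D-1)=1$.

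\textbf{Least preferred arguments.} For $x\in min_{\succeq}(\mathcal{X})$, the distance is obtained by walking the ordering from the top. Each step between consecutive non-indifferent arguments adds $\delta$ or $\Delta$. The walk crosses every $\succ$ and every $\succ\!\succ$ counted at lines 2--3, exactly $n$ and $m$ of them. Therefore $d(x)=1+n\delta+m\Delta=D$, and $\nu_2(\succeq)(x)=0/(D-1)=0$.

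\textbf{Main obstacle.} The only delicate step is justifying that $n$ and $m$ count exactly the increments accumulated along the ordering. They must count the relations between consecutive indifference classes, not all comparable pairs. We read them this way, which is consistent with the paper's statement that $d(x)=D$ gives $\bot$ for $\nu_{\ref{bsef:1}}$. With that reading, both limits follow, so $\top=1$ and $\bot=0$, and $\nu_{\ref{bsef:2}}$ satisfies Base Score Normalisation. As a sanity check, the result coincides with $\nu_{\ref{bsef:1}}$ at $\top=1$, $\bot=0$ (Proposition~\ref{prop:3}).
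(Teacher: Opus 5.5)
Your proof is correct and takes the same route as the paper. Both substitute $\alpha=\beta=0$ to get $(D-d(x))/(D-1)$, then evaluate at $d=1$ for the most preferred arguments and at $d=D$ for the least preferred ones. Your extra checks, that $D>1$ and that $n$ and $m$ must count relations between consecutive indifference classes (the reading the paper's worked example with $\delta=1$, $\Delta=3$, $D=5$ also requires), spell out what the paper's proof takes for granted when it asserts $d(l)=D$.
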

    
    
        \begin{proposition}\label{prop:6}
            Setting $\alpha=\frac{\beta}{2}$ in $\nu_{\ref{bsef:2}}$ leads to equal limits $\top=1-\bot$, satisfying \textnormal{Base Score Centralisation} (Property~\ref{property:centralisation}).
        \end{proposition}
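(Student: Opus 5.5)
The plan is a direct computation of the two limit values of $\nu_{\ref{bsef:2}}$, followed by a check of the defining equation of Property~\ref{property:centralisation}. First I record the two extreme distances produced by Algorithm~\ref{alg:flexible_base_score_extraction}.

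\begin{itemize}
\item Every $x\in max_{\succeq}(\mathcal{X})$ receives $d(x)=1$, since these arguments start the ordering and only $\simeq$ relates them to one another.
\item Every $x\in min_{\succeq}(\mathcal{X})$ receives $d(x)=D=1+n\delta+m\Delta$. Along the ordering, each $\succ$ step adds $\delta$ and each $\succ\!\succ$ step adds $\Delta$. The least preferred arguments are reached after all $n$ strict and $m$ much-greater steps.
\item The standing assumption $n+m>0$, together with $\delta,\Delta>0$, gives $D>1$. Hence the denominator $D-1+\beta>0$ for every $\beta\geq 0$.
\end{itemize}

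Substituting into Eq.~\ref{eq:base_score_function_ranking_a_b} gives
\[
\top=\nu_2(\succeq)(x_{\max})=\frac{D-1+\alpha}{D-1+\beta},\qquad
\bot=\nu_2(\succeq)(x_{\min})=\frac{\alpha}{D-1+\beta}.
\]

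Next I compute $1-\bot=\frac{D-1+\beta-\alpha}{D-1+\beta}$ and compare it with $\top$. The equation $\top=1-\bot$ reduces to $D-1+\alpha=D-1+\beta-\alpha$, which is equivalent to $2\alpha=\beta$. So the choice $\alpha=\beta/2$ yields $\top=1-\bot$, as Property~\ref{property:centralisation} requires.

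I will also check that this choice respects the admissibility constraint $\alpha\leq\beta$ stated after Eq.~\ref{eq:base_score_function_ranking_a_b}. Under it, $\bot=\frac{\beta/2}{D-1+\beta}\leq 1/2$, so $0\leq\bot\leq\top\leq 1$ holds automatically. The same computation shows the converse: for any fixed $D$, centralisation forces $\alpha=\beta/2$.

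The only step that needs care is justifying that the least preferred arguments indeed get $d(x)=D$. This relies on the ordering traversed by the algorithm being a chain in which every $\succ$ and $\succ\!\succ$ counted in $n$ and $m$ lies between consecutive levels. I would state this interpretation of $n$ and $m$ explicitly, namely as the numbers of adjacent level transitions, and then the rest is routine algebra.
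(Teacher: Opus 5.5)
Your proposal is correct and follows the paper's proof essentially verbatim. You compute $\top=\frac{D-1+\alpha}{D-1+\beta}$ and $\bot=\frac{\alpha}{D-1+\beta}$ from $d=1$ and $d=D$, then equate numerators in $\top=1-\bot$ to get $2\alpha=\beta$. Your extra remarks make explicit assumptions the paper leaves implicit: the positivity of the denominator, the bounds $0\le\bot\le\top\le 1$, and the reading of $n,m$ as counts of adjacent level transitions, which is what guarantees $d=D$ for the least preferred arguments.
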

    
    
        To satisfy Property~\ref{property:regularity} (Base Score Relation Regularity), the algorithm variables to compute the distance ($\Delta, \delta$) must be constant. 
    
        \begin{proposition}\label{prop:7}
            Setting $\Delta$ and $\delta$ as fixed constants satisfies \textnormal{Base Score Relation Regularity} (Property~\ref{property:regularity}) for $\nu_{\ref{bsef:1}}$ and $\nu_{\ref{bsef:2}}$. 
        \end{proposition}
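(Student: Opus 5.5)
The plan is to compute the base-score gap between adjacent arguments directly from the distances produced by Algorithm~\ref{alg:flexible_base_score_extraction}, and to show that this gap depends only on the type of relation between them. Both $\nu_1$ and $\nu_2$ are affine in $d(x)$ once the ordering (and hence $D$) is fixed. Indeed,
\[
\nu_1(\succeq)(x)=\bot+(\top-\bot)\tfrac{D-d(x)}{D-1},
\qquad
\nu_2(\succeq)(x)=\tfrac{D-d(x)+\alpha}{D-1+\beta}.
\]
So for any $a,b$ we obtain
\[
\tau(a)-\tau(b)=\kappa\,\bigl(d(b)-d(a)\bigr),
\]
with $\kappa=\frac{\top-\bot}{D-1}$ for $\nu_1$ and $\kappa=\frac{1}{D-1+\beta}$ for $\nu_2$. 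The key observation is that $\kappa$ is the same for every pair within one ordering, because $D=1+n\delta+m\Delta$ is computed once (line 4) and $\top,\bot,\alpha,\beta$ are fixed parameters.

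Next I would read off $d(b)-d(a)$ for adjacent pairs from the distance-update step of the algorithm. Arguments are processed in order of $\succeq$, and each argument's distance is its predecessor's distance plus $0$, $\delta$ or $\Delta$, according as the relation is $\simeq$, $\succ$ or $\succ\!\succ$. For an adjacent pair, the definition of adjacency rules out intermediate strictly-between arguments, so $b$ lies in the equivalence class immediately following $a$'s. Hence $d(b)-d(a)=\delta$ if $a\succ b$, $\Delta$ if $a\succ\!\succ b$, and $0$ if $a\simeq b$. Since $\delta$ and $\Delta$ are fixed constants, independent of position in the ordering, two adjacent pairs with the same relation type have equal gaps $\kappa\delta$ (respectively $\kappa\Delta$, or $0$). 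That is exactly $\tau(a)-\tau(b)=\tau(c)-\tau(d)$.

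The main obstacle is the gap between the literal wording of Property~\ref{property:regularity} and what the algorithm delivers. The property quantifies over \emph{any} two adjacent pairs, but a $\succ$-pair and a $\succ\!\succ$-pair cannot have equal gaps if Axiom~2 is to hold. I would therefore state explicitly that the comparison is between pairs of the same relation type, which is the reading supported by Example~\ref{ex:BSRR} and by the preference-ratio discussion. I would also note why constancy is genuinely needed: if $\delta$ or $\Delta$ varied along the ordering, the gaps would differ. A smaller technical point is to check that ``previous argument'' in the algorithm lines up with adjacency when equivalence classes contain several arguments. Equal arguments receive identical $d$, so the increment between consecutive classes is unaffected, and the argument goes through.
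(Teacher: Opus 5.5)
Your proposal is correct and follows the paper's proof essentially step for step. Both write $\tau(x)-\tau(y)$ as a fixed multiple of $d(y)-d(x)$: the multiple is $(\top-\bot)/(D-1)$ for $\nu_1$ and $1/(D-1+\beta)$ for $\nu_2$. Both then read off from Algorithm~\ref{alg:flexible_base_score_extraction} that adjacent pairs of the same relation type have distance gap $\delta$, $\Delta$ or $0$. You make the restriction to pairs of the same relation type explicit; the paper relies on the same restriction tacitly when it writes ``suppose both pairs share the same preference relation type''.
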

    

         Property \ref{property:base_score_preference_stability} determines that the base scores of a set of arguments with a given preference relation remain constant when adding a new argument that is equally preferred to an existing one. 
    
        \begin{proposition}\label{prop:8}
            Both BSEFs satisfy \textnormal{Base Score Preference Stability} (Property~\ref{property:base_score_preference_stability}).
        \end{proposition}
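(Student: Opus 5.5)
The plan is to show that the two ingredients entering $\nu_1$ and $\nu_2$ are unchanged when we pass from $\succeq$ to $\succeq'$: the maximal distance $D$, and the distance $d(x)$ of each $x\in\mathcal{X}$. The parameters ($\top,\bot$ for $\nu_1$; $\alpha,\beta$ for $\nu_2$; and $\delta,\Delta$ in Algorithm~\ref{alg:flexible_base_score_extraction}) are taken to be the same for $\nu$ and $\nu'$. Once $D'=D$ and $d'(x)=d(x)$ hold for all $x\in\mathcal{X}$, Eq.~\ref{eq:base_score_function_margins} and Eq.~\ref{eq:base_score_function_ranking_a_b} give $\tau'(x)=\tau(x)$ immediately, so both BSEFs are handled at once.

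First I would describe $\succeq$ by its indifference classes. Since $\simeq$ is an equivalence relation, $\mathcal{X}$ splits into classes $C_1,\dots,C_k$, listed in descending order of preference. Consecutive classes are separated by either $\succ$ or $\succ\!\succ$. I would read $n$ and $m$ in lines~2--3 of the algorithm as the number of $\succ$- and $\succ\!\succ$-steps between consecutive classes. This is the reading under which $D=1+n\delta+m\Delta$ is exactly the distance assigned to the least preferred class, so that $d(x)=D$ on $\min_{\succeq}(\mathcal{X})$.

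Next I would show that $\succeq'$ has classes $C_i'=C_i\cup\{s\in S \mid s\simeq' a \text{ for some } a\in C_i\}$. Each $s\in S$ is indifferent to some $a\in\mathcal{X}$, and $\succeq'$ restricted to $\mathcal{X}$ coincides with $\succeq$. By transitivity, every $s$ therefore joins an existing class and creates no new one. The order between classes and the type of step between them ($\succ$ versus $\succ\!\succ$) are inherited from representatives in $\mathcal{X}$; for the much-preferred relation this also uses its transitivity. Hence $k$, $n$, $m$ are unchanged, and so $D'=D$.

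Finally I would argue, by induction along the class order, that the algorithm assigns the same distance to every member of $C_i'$ as it did to $C_i$:
\begin{itemize}
\item The top class gets $d=1$.
\item Moving within a class keeps $d$ unchanged, via the $\simeq$ branch of the algorithm.
\item Moving from $C_i'$ to $C_{i+1}'$ adds $\delta$ or $\Delta$ according to the same step type as before.
\end{itemize}
The predecessor $b$ in line~8 may now be some $s\in S$, but since $s\simeq a$ for some $a$ in the same class, the relation between $b$ and the next argument is unchanged. This gives $d'(x)=d(x)$ for all $x\in\mathcal{X}$.

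The main obstacle I expect is the interpretation of $n$ and $m$ in the algorithm. If they counted all strictly ordered pairs of arguments, rather than steps between adjacent classes, adding equally preferred arguments would inflate $D$, and the property would fail. So the proof must fix the class-step reading explicitly, justified by the requirement that $d=D$ on the minimal class so that $\nu_1$ attains $\bot$. The remaining steps are routine.
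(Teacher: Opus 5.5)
Your proposal is correct and follows essentially the same route as the paper's proof: both show that $D=1+n\delta+m\Delta$ is unchanged, because $n$ and $m$ count steps between indifference tiers and indifferent newcomers create no new tiers, and that Algorithm~\ref{alg:flexible_base_score_extraction} assigns every original argument the same distance $d(x)$, so $\nu_1$ and $\nu_2$ return identical scores on $\mathcal{X}$. Your version makes explicit what the paper only asserts, and one justification needs correcting: a newcomer $s\simeq' a$ inherits $a$'s $\succ$ or $\succ\!\succ$ step type only if the $\succ\!\succ$ labels carry over from $\succeq$ to $\succeq'$ and are constant on indifference classes (assumptions implicit in the algorithm and the paper's proof), not by transitivity of $\succ\!\succ$.
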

    

        Finally, if the selected gradual semantics is monotonic and balanced~\cite{Baroni_IJAR19}, if an argument is preferred to another and both have the same influences, the preferred argument will be stronger.
    
        \begin{proposition}\label{prop:2}
            Given a BSEF $\nu$, where $\nu$ satisfies Axiom 1, and a semantics $\sigma$, where $\sigma$ satisfies monotonicity and balance, for any arguments $a$, $b$ where $a \succ b$ and $a$ and $b$ have equivalent supporters and attackers, then $\sigma(a) > \sigma(b)$.
        \end{proposition}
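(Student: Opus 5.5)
The argument combines Axiom~1 with the two semantic properties from~\cite{Baroni_IJAR19}. I first recall the relevant properties in the form I will use them. \emph{Balance} says that if two arguments have equivalent sets of attackers and supporters, their final strengths are ordered exactly as their base scores: equal base scores give equal strengths, and a strictly greater base score gives a strictly greater strength. \emph{Monotonicity} says that strength is non-decreasing in base score and in supporters' strengths, and non-increasing in attackers' strengths. I will make ``equivalent supporters and attackers'' precise as follows: there are bijections between the attackers of $a$ and of $b$, and between the supporters of $a$ and of $b$, that preserve final strength. In the acyclic QBAFs for $D$ considered here, strengths are computed bottom-up, so this notion is well defined.

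The proof then takes three steps. First, let $\tau=\nu(\succeq)$. Since $a\succ b$ and $\nu$ satisfies Axiom~1, we get $\tau(a)>\tau(b)$. Second, invoke the strict form of balance (equivalently, strict monotonicity in the base score) for $a$ and $b$. Their aggregated attacker and supporter influences coincide, so the only difference between their evaluations is the base score. Most modular semantics compute $\sigma(x)=\iota(\tau(x),\alpha(x))$, where $\alpha(x)$ is the aggregated influence. For these, the conclusion follows once the influence function $\iota$ is strictly increasing in its first argument, which holds for QE, DF-QuAD and Euler-based semantics. Third, conclude $\sigma(a)>\sigma(b)$.

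The main obstacle is the second step. Monotonicity in the sense of~\cite{Baroni_IJAR19} is often stated non-strictly, and strictness can fail at saturation. For example, under DF-QuAD an argument with base score below $1$ but with a supporter of strength $1$ reaches strength $1$, and an argument with a dominant attacker can reach strength $0$. I would handle this by reading balance in its strict form, in which equal influences plus a strictly larger base score yield a strictly larger strength, and noting that this is the standard reading. If needed, I would add a brief remark that under the plain non-strict reading only $\sigma(a)\ge\sigma(b)$ follows, with strictness holding whenever the semantics is strictly monotone in the base score, as QE is on $(0,1)$.
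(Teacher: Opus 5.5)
Your proposal follows the paper's proof: Axiom~1 gives $\tau(a)>\tau(b)$, and because $a$ and $b$ receive identical influences, a strict base-score reading of the semantic properties gives $\sigma(a)>\sigma(b)$. The only difference is where the strictness comes from: the paper places it in monotonicity (``under standard monotonicity assumptions'') and uses balance only to attribute the strength difference to the base scores, whereas you place it in a strict form of balance.

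Your saturation caveat is correct and applies equally to the paper's argument. Under DF-QuAD a strength-$1$ supporter sends every base score to $1$, so the non-strict properties give only $\sigma(a)\ge\sigma(b)$. However, this contradicts your claim in the second step that DF-QuAD's influence function is strictly increasing in its first argument, so drop DF-QuAD from that list.
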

    

    \subsection{Experimentation}

        The aim of this experiment is to evaluate the practical behaviour of the proposed BSEFs under different user preference profiles and design choices, and to examine how various gradual semantics influence the resulting decisions. Specifically, we test whether the generated QBAFs lead to consistent, interpretable, and preference-aligned outcomes across semantics.

        We consider the running example from Fig.~\ref{fig:intro_vertical} in which the robot must choose between slow ($D_1$) and fast ($D_2$) feeding paces (see Example~\ref{ex:introduction}). Each QBAF instance is constructed using six arguments and a pair of decision arguments.

        To explore the robustness of our approach, we generated 30,000 random samples of preference orderings and design choices, recording the option selected by three gradual semantics. These semantics are: QE Model,  Euler-Based (EB)~\cite{Amgoud_IJAR18}, and  DF-QuAD (DF)~\cite{Rago_kr16}, which are among the most used in the literature. The decision arguments' base scores are set to 0.5. We computed pairwise agreement and Cohen’s Kappa coefficient ($\kappa$) to assess consistency between 
        semantics. Under the centralisation property, the pairwise agreement values were $QE-EB = 0.98$, $QE-DF = 0.85$, and $EB-DF = 0.84$. The corresponding Cohen’s Kappa scores were $QE-EB = 0.96$, $QE-DF = 0.65$, and $EB-DF = 0.62$. Without centralisation, the pairwise agreement values were $QE-EB = 0.96$, $QE-DF = 0.81$, and $EB-DF = 0.81$, with Cohen’s Kappa scores of $QE-EB = 0.92$, $QE-DF = 0.52$, and $EB-DF = 0.52$. 
        These results show a strong alignment between QE and EB, while DF-QuAD diverges more, particularly when base score ranges are extreme (e.g., $\tau(e) \approx 1$, $\tau(e) \approx 0$).
        This difference shows the structural properties of the methods discussed in Sec.~\ref{sec:select_model}. \looseness=-1

        To further analyse the influence of user preferences and design parameters, Table~\ref{table:examples_prefs_choices_margins} reports examples of preference orderings combined with different design choices, evaluated under the three gradual semantics. The Base Score Relation Regularity property is assumed to reduce experimental complexity and noise.

        The results confirm that the preference ordering itself is the dominant determinant of the final decision, while design choices act as fine-tuning factors that adjust numerical sensitivity but sometimes invert the selected option.
        For instance, when arguments related to safety ($c$ and $f$) are prioritised over those concerning enjoyment ($a$ and $d$), all semantics consistently select the slow feeding pace, aligning with the user’s intended caution.
        Conversely, when enjoyment-related arguments are most preferred, the system reliably opts for fast feeding, demonstrating the coherence of the extracted base scores with user values.
        
        Design choices such as the range limits affect the magnitude of the option strengths rather than the decision direction. Narrower ranges (e.g., $(\top, \bot) = (0.75, 0.25)$) compress the difference between argument strengths, producing less decisive but more stable outcomes, while broader ranges (e.g., $(1, 0)$) amplify contrasts and make the system more sensitive to preference changes.
        Similarly, the preference ratio $(\Delta / \delta)$ controls how much stronger a \emph{much greater} preference influences the base scores. Increasing this ratio enhances differentiation between preference tiers but may also overemphasise extreme preferences, especially under DF-QuAD, which is more responsive to high base score values.

    \subsection{Discussion on the Gradual Semantics}
    \label{sec:select_model}
    
        The previous analysis revealed that, although all semantics preserve preference–decision coherence, their sensitivity to base score variations differs substantially.
        This behaviour stems from the aggregation and influence functions each model employs.

        \begin{figure*}
            \centering
            \includegraphics[width=0.99\linewidth]{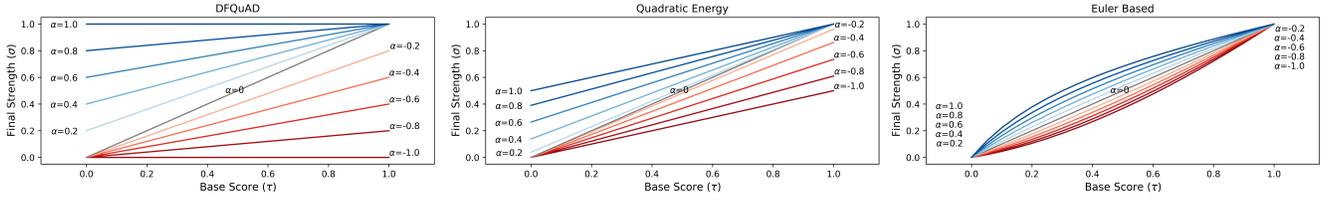}
            \caption{Influence of an argument’s base score and its supporters or attackers on the final argument strength under different gradual semantics.
            Each subplot corresponds to a distinct semantics: DF-QuAD (left), Quadratic Energy  (centre), and the Euler-based Semantics (right). The x-axis represents the base score of the influenced argument, and the y-axis its resulting final strength. Blue lines denote when the influencing argument is a supporter (+), while red lines denote when it is an attacker (-). Each line has assigned its aggregation ($\alpha$), with colour intensity indicating its strength.}
            \label{fig:influences}
            \Description{Each subplot shows the influence of an argument's base score (x-axis), and the different aggregation strengths (lines) for that argument's final strength (y-axis) for the different gradual semantic models. The left figure is for DF-QuAD, the centre one is for Quadratic Energy, and the right is for Euler-based. The blue lines are for supports, while the red lines are for attacks. At the tip of the lines is written the base score of the supporter (+) or attacker (-). The colour intensity of the lines represents the strength of the attack or support.}
        \end{figure*}

        Figure~\ref{fig:influences} shows the influence of a single attacker or supporter (the influencer), and an argument's base score (the influenced) for its final strength. The DF-QuAD semantics use product aggregation with a linear influence function, which causes the influencer’s base score to dominate the final outcome.
        When an attacking (or supporting) argument has a base score close to 1, it almost completely drives the influenced argument’s strength toward 0 (or 1), regardless of the influenced base score. This strong reactivity can be desirable in safety-critical scenarios, where highly preferred arguments should decisively determine the outcome, but it may reduce robustness in settings with uncertain or noisy preferences.
        In contrast, the QE semantics combines sum aggregation with a 2-Max influence function. Here, both the influencer and the influenced arguments contribute comparably to the final strength, leading to smoother transitions and more gradual adaptation to preference changes (e.g., given an influencer with a base score of 1, the influenced's final strength will be $\sigma=0.5\tau+0.5$ if it receives a support, and $\sigma=0.5\tau$ if it receives an attack). 
        QE, therefore, offers a good compromise between responsiveness and stability, which makes it particularly suitable for explainable decision-support systems that require consistent reasoning traces. Finally, the EB semantics is the most conservative, using sum aggregation with an Euler-based influence function. When an argument’s base score is close to the extremes (0 or 1), its final strength remains nearly fixed at those values, regardless of new supports or attacks. This behaviour preserves high-confidence arguments, making EB appropriate when users insist on arguments being non-negotiable or when decisions must respect rigid value priorities.\looseness=-1

\section{Related Work}

    Gradual argumentation relies on the arguments' base scores, and its literature has unveiled different ways of determining them depending on the context. In~\cite{Baroni_AC15}, the QuAD framework was introduced to support debates on design alternatives. Since these scenarios do not involve a large number of voters, base scores are assessed by experts or derived from predefined criteria. As an extension, in~\cite{Rago_icppmas17}, the authors presented QuAD-V, which allows users to vote for or against arguments, using these votes to compute base scores. A similar procedure is carried out in~\cite{Tarle_aamas22}. In another work~\cite{Cocarascu_aamas19} in which a QBAF is generated from movie reviews, the base scores are obtained from an aggregation of critics' votes. 
    Another use of QBAFs is to apply their methods for explainable AI, e.g. by transforming (deep) neural networks into QBAFs~\cite{Albini_arxiv20,DBLP:conf/ecai/AyoobiPT23}. For instance, in~\cite{Albini_arxiv20}, the base scores of the arguments are obtained from the attacking, supporting neurons, and their activations in the network, similarly to~\cite{Potyka_aaai21}. In contrast, the work from~\cite{Battaglia_icsum24} sets base scores according to users' preferences to personalise decision support, similarly to our work. There, base scores are initially set arbitrarily and then discounted for less preferred arguments. This work similarly relies on preferences, but uses them to determine the initial base scores directly.\looseness=-1

    Within structured argumentation, the work in~\cite{Tamani_icfs14} focuses on ASPIC argumentation frameworks with fuzzy set theory, using expert knowledge to assess argument importance, similarly to a base score.

    In summary, two main limitations are found in existing work. First, most approaches do not account for the preferences of a single user when setting base scores, limiting personalisation in argumentation-based decision-making systems. Second, when preferences are considered, base scores are initialised arbitrarily and adjusted afterwards. 
    In this work, we address those research gaps. 

\section{Conclusions and Future Work}

    
    This work introduces BSEFs, a principled method to derive argument base scores from user preference orderings in gradual argumentation. 
    We defined a set of axioms and desirable properties, proposed two concrete BSEFs with tunable design choices, and incorporated non-linear preference intensity to better capture human reasoning.
    Theoretical analysis confirmed that the functions satisfy coherence and regularity properties, while experiments in a robotics scenario showed that preference ordering dominated the decision outcome and design choices primarily modulate sensitivity.


    Future directions include the fine-tuning or definition of the different design choices given the context and user feedback, allowing partial orderings in preferences in case some arguments cannot be compared~\cite{Visser_wtafa11}, and computing the aggregation when using this approach when considering more than one user's preferences in the decision-making~\cite{Rago_AI25}. A philosophical question which should be commented on is whether it is realistic to consider the scenario in which the most preferred argument supports the least preferred argument. 
    Our focus was on bipolar argumentation, but gradual structured argumentation frameworks are receiving attention lately~\cite{Rago_kr25, Rapberger_kr25}. A future extension of this work is to adapt our method to structured argumentation. Furthermore, we plan to validate our approach by performing a user study. Finally, the extension of this work into value-based argumentation could provide an approach for more value-aligned decision-making~\cite{Atkinson_AL21, Bodanza_AC23}. This could be potentially achieved by ordering the values promoted by the arguments and assigning a base score to each value.

\begin{acks}

This work was supported by the project CHLOE-MAP PID2023-152259OB-I00 funded by MCIU/ AEI /10.13039/501100011033 and by ERDF, UE; the project ROBOCAT SDC006/25/000016 funded by the Generalitat de Catalunya, NextGenerationEU.
A. Civit has been supported by AGAUR-FI ajuts (2023 FI-3 00065) Joan Oró of the 
Generalitat of Catalonia and the European Social Plus Fund.
F. Toni has been supported by the European Research Council (ERC) under the European
Union’s Horizon 2020 research and innovation programme (grant agreement No. 101020934).
\end{acks}

\bibliographystyle{ACM-Reference-Format} 
\bibliography{references}



\newpage

\begin{appendices}
\section{Proofs}
\label{sec:proofs}

Here we give the proofs for the theoretical work in the paper.
\\
\\
Proof for Proposition~\ref{prop:1}:

\begin{proof}
    Let $a,b\in\mathcal{X}$, where $a \succ b$ in the given descending preference ordering, and $f(x)$ is a monotonically increasing function. By definition, $f(x)\leq f(x+\Delta x)$. Using Definition~\ref{def:algorithm_distance_constants}, if $a\succ b$ the distances from $a$ ($d(a)$) and $b$ ($d(b)$) satisfy $d(a)<d(b)$, therefore, the BSEF~\ref{bsef:1}, we obtain $\nu_{\ref{bsef:1}}(\succeq)(a)>\nu_{\ref{bsef:1}}(\succeq)(b)$, since the variable part of the function $(D-d(x))/(D-1)$ is lower when $d(x)$ is greater. The same occurs for BSEF~\ref{bsef:2}.
\end{proof}

Proof for Proposition~\ref{prop:axiom_1}:

\begin{proof}
    Let $a,b\in\mathcal{X}$ be two adjacent arguments in the ordering, such that $a \succ b$ in the preference ordering. According to Definition~\ref{def:algorithm_distance_constants} and Algorithm~\ref{alg:flexible_base_score_extraction}, the distance assigned to $a$ and $b$ satisfies $d(b) = d(a) + \delta$. Where $\delta>0$ for a regular preference ($\succ$) and $\Delta$>0 for a \emph{much greater} preference ($\succ\!\succ$). Both proposed base score extraction functions $\nu_1$ and $\nu_2$ are strictly decreasing with respect to $d(x)$. Therefore, if $d(a) < d(b)$, it follows that 
$\nu(\succeq)(a) > \nu(\succeq)(b)$. 
    This implies that the base score of the preferred argument $a$ is greater than that of $b$, i.e., $\tau(a) > \tau(b)$.
    Hence, setting $\delta > 0$ (and similarly $\Delta > 0$) ensures that both base score extraction functions satisfy Axiom~\ref{axiom:1}.
\end{proof}
Proof for Proposition~\ref{prop:axiom_2}:
\begin{proof}
    Let $a,b,c,d\in\mathcal{X}$ with a preference ordering where $a \succ\!\succ b \succ c \simeq d$. According to Definition~\ref{def:algorithm_distance_constants} and Algorithm~\ref{alg:flexible_base_score_extraction}, the distances assigned to these arguments satisfy
    \[
    d(b) = d(a) + \Delta, \qquad d(c) = d(b) + \delta, \qquad d(d) = d(c),
    \]
    where $\Delta$ and $\delta$ are the increments for the relations $\succ\!\succ$ and $\succ$, respectively.

    For both proposed base score extraction functions, $\nu_1$ and $\nu_2$, the base score difference between two consecutive arguments is proportional to their distance difference.
    Hence,
    \[
    \tau(a) - \tau(b) \propto \Delta, \qquad
    \tau(b) - \tau(c) \propto \delta, \qquad
    \tau(c) - \tau(d) = 0.
    \]
    
    If $\Delta > \delta > 0$, it follows that
    \[
    \tau(a) - \tau(b) > \tau(b) - \tau(c) > \tau(c) - \tau(d),
    \]
    which satisfies Axiom~\ref{axiom:2}.
    Therefore, setting $\Delta > \delta > 0$ guarantees that both $\nu_1$ and $\nu_2$ respect the intended magnitude differences between \emph{much greater} and \emph{greater} preferences.
\end{proof}
Proof for Proposition~\ref{prop:axiom_3}:
\begin{proof}
    Let $\nu(\succeq)$ be a base score extraction function applied to a set of arguments $X$
    with a preference ordering $\succeq$.
    By definition, $\nu(\succeq)$ assigns base scores according to the relative
    ordering and distances among the arguments.
    
    Suppose now that the preference ordering changes to a non-isomorphic
    ordering $\succeq'$.
    By Definition~\ref{def:pref_ordering}, two preference orderings are isomorphic if and only if
    there exists a bijection $f : X \to X'$ such that, for all $i,j \in X$,
    \[
    i \succeq j \;\Leftrightarrow\; f(i) \succeq' f(j).
    \]
    If $\succeq$ and $\succeq'$ are \emph{not} isomorphic,
    then at least one pair of arguments $(i,j)$ exists such that their preference
    relations differ, i.e.,
    \[
    i \succ j \quad \text{but} \quad f(i) \not\succ' f(j),
    \]
    or vice versa.
    
    Because both proposed base score extraction functions
    $\nu_1$ and $\nu_2$ are monotonic with respect to the preference order,
    a change in the ordering of any pair of arguments necessarily alters
    their relative distances $d(i)$, $d(j)$ and hence their base scores
    $\tau(i)$ and $\tau(j)$.
    Consequently,
    \[
    \nu(\succeq) \neq \nu(\succeq'),
    \]
    which satisfies Axiom~\ref{axiom:3}.
\end{proof}
Proof for Proposition~\ref{prop:3}:
\begin{proof}
    Given the definition of $\nu_1$, if we set $\top = 1$ and $\bot = 0$, then
    \[
    \nu_1(\succeq)(x) \;=\; 0 + (1-0)\,\frac{D-d(x)}{D-1}
    \;=\; \frac{D-d(x)}{D-1}.
    \]
    For the most preferred arguments $M\in\mathcal{X}$, we have $\forall m\in M, \ d(m)=1$, hence
    \[
    \nu_1(\succeq)(m) \;=\; \frac{D-1}{D-1} \;=\; 1.
    \]
    For the least preferred arguments $L\in\mathcal{X}$, we have $\forall l\in L, \ d(l)=D$, hence
    \[
    \nu_1(\succeq)(l) \;=\; \frac{D-D}{D-1} \;=\; 0.
    \]
    Because $d(x) \in \{1,\dots,D\}$ and the function $(D-d(x))/(D-1)$ decreases linearly from $1$ to $0$ as $d(x)$ goes from $1$ to $D$, all base scores lie in $[0,1]$ with the required endpoints. Therefore, $\nu_1$ with $\top=1$ and $\bot=0$ satisfies the Base Score Normalisation property.
\end{proof}
Proof for Proposition~\ref{prop:4}:
\begin{proof}
    Given the definition of $\nu_1$, if we set $\top = 1 - \bot$ with $\bot \leq 0.5$, then
    \[
    \top - \bot \;=\; (1-\bot)-\bot \;=\; 1 - 2\bot.
    \]
    Substituting into $\nu_1$ yields
    \[
    \nu_1(\succeq)(x) \;=\; \bot + (1-2\bot)\,\frac{D-d(x)}{D-1}.
    \]
    For the most preferred arguments ($m\in\mathcal{X}$ such that $d(m)=1$),
    \[
    \nu_1(\succeq)(m) \;=\; \bot + (1-2\bot)\,\frac{D-1}{D-1}
    = \bot + (1-2\bot) = 1-\bot = \top.
    \]
    For the least preferred arguments ($l\in\mathcal{X}$ such that $d=D$),
    \[
    \nu_1(\succeq)(l) \;=\; \bot + (1-2\bot)\,\frac{D-D}{D-1}
    = \bot.
    \]
    Moreover, the average of the endpoint values is
    \[
    \frac{\top + \bot}{2} \;=\; \frac{(1-\bot) + \bot}{2} \;=\; \frac{1}{2},
    \]
    so the endpoints are symmetric about $0.5$. Since $\bot \leq 0.5$ guarantees $\top \geq \bot$, the assignment is valid within $[0,1]$. Therefore, setting $\top = 1-\bot$ (with $\bot \leq 0.5$) makes $\nu_1$ satisfy the Base Score Centralisation property.
\end{proof}
Proof for Proposition~\ref{prop:5}:
\begin{proof}
    Given the definition of $\nu_2$, if we set $\alpha=\beta=0$, then 
    \[
    \nu_2(\succeq)(x)\;=\; \frac{D-d(x)}{D-1}.
    \]
    For the most preferred arguments $m\in\mathcal{X}$, we have $d(m)=1$, hence
    \[
    \nu_2(\succeq)(m) \;=\; \frac{D-1}{D-1} \;=\; 1.
    \]
    For the least preferred arguments $l\in\mathcal{X}$, we have $d(l)=D$, hence
    \[
    \nu_2(\succeq)(l) \;=\; \frac{D-D}{D-1} \;=\; 0.
    \]
    Since $d(x)\in\{1,\dots,D\}$, the value $(D-d(x))/(D-1)$ ranges linearly from $1$ to $0$ as $d(x)$ goes from $1$ to $D$, and all values lie in $[0,1]$. Therefore, $\nu_2$ with $\alpha=\beta=0$ satisfies the Base Score Normalisation property.
\end{proof}
Proof for Proposition~\ref{prop:6}:
\begin{proof}
    Given the definition of $\nu_2$, for the most preferred arguments $m\in\mathcal{X}$ ($d(m)=1$) we obtain the upper limit
    \[
    \top \;=\; \nu_2(\succeq)(m)=\frac{D-1+\alpha}{D-1+\beta},
    \]
    and for the least preferred arguments $l\in\mathcal{X}$ ($d(l)=D$), the lower limit is
    \[
    \bot \;=\; \nu_2(\succeq)(l)=\frac{\alpha}{D-1+\beta}.
    \]
    We require $\top = 1-\bot$. Substituting the expressions above and bringing them to a common denominator gives
    \[
    \frac{D-1+\alpha}{D-1+\beta} \;=\; 1 - \frac{\alpha}{D-1+\beta}
    \;=\; \frac{D-1+\beta-\alpha}{D-1+\beta}.
    \]
    Equating numerators yields
    \[
    D-1+\alpha \;=\; D-1+\beta-\alpha \quad\Longrightarrow\quad 2\alpha=\beta,
    \]
    i.e. $\alpha=\beta/2$. Hence, setting $\alpha=\beta/2$ ensures $\top=1-\bot$, so $\nu_2$ satisfies the Base Score Centralisation property.
\end{proof}
Proof for Proposition~\ref{prop:7}:
\begin{proof}
    Let $(a,b)$ and $(c,d)$ be two pairs of adjacent arguments in a preference ordering,
    and suppose both pairs share the same preference relation type
    (i.e.\ both are $\succ$, both are $\succ\!\succ$, or both are $\simeq$).
    By Algorithm~\ref{alg:flexible_base_score_extraction}, the distance difference between the members of each adjacent pair is identical to the increment associated with that relation:
    \[
    d(b) - d(a) = d(d) - d(c) =
    \begin{cases}
    \delta & \text{if the relation is }\succ,\\[4pt]
    \Delta & \text{if the relation is }\succ\!\succ,\\[4pt]
    0 & \text{if the relation is }\simeq.
    \end{cases}
    \]
    
    We show that for both proposed base score extraction functions, the base score difference between the elements of an adjacent pair depends only on this distance difference (hence, it is the same for both pairs).
    
    \medskip
    
    \noindent\emph{For \(\nu_1\):}
    \[
    \nu_1(\succeq)(x)=\bot+(\top-\bot)\frac{D-d(x)}{D-1}.
    \]
    Thus, for an adjacent pair \((x,y)\) we have
    \begin{align*}
        \tau(x)-\tau(y)
        &= \nu_1(\succeq)(x)-\nu_1(\succeq)(y) \\
        &= (\top-\bot)\frac{(D-d(x))-(D-d(y))}{D-1} \\
        &= (\top-\bot)\frac{d(y)-d(x)}{D-1}.
    \end{align*}
    Hence if \(d(y)-d(x)\) is the same (e.g.\ \(\delta\) or \(\Delta\)) for both pairs,
    then \(\tau(a)-\tau(b)=\tau(c)-\tau(d)\). For the indifference relation (\(\simeq\)),
    \(d(y)-d(x)=0\) and the difference is \(0\).
    
    \medskip
    
    \noindent\emph{For \(\nu_2\):}
    \[
    \nu_2(\succeq)(x)=\frac{D-d(x)+\alpha}{D-1+\beta}.
    \]
    For an adjacent pair \((x,y)\) we obtain
    \[
    \tau(x)-\tau(y)
    = \frac{D-d(x)+\alpha}{D-1+\beta}-\frac{D-d(y)+\alpha}{D-1+\beta}
    = \frac{d(y)-d(x)}{D-1+\beta}.
    \]
    Again, this difference depends only on \(d(y)-d(x)\); therefore, equal increments \(\delta\) or \(\Delta\) produce equal base score differences for the two pairs, and indifference yields zero difference.
    
    Since fixing \(\delta\) and \(\Delta\) makes the distance differences between any two adjacent arguments with the same relation identical, both \(\nu_1\) and \(\nu_2\) assign equal base score gaps to such adjacent pairs. This is precisely the statement of Base Score Relation Regularity.
\end{proof}
Proof for Proposition~\ref{prop:8}:
\begin{proof}
    Let $\succeq$ be a preference ordering over $\mathcal{X}$ and let $S$ be a set of new arguments such that
    $\mathcal{X}' = \mathcal{X} \cup S$ and for every $s\in S$ there exists $a\in \mathcal{X}$ with $a \simeq s$ under the extended ordering $\succeq'$.
    By construction, adding $S$ introduces only \emph{indifference} relations ($\simeq$) between new arguments and some existing arguments in $\mathcal{X}$.
    Observe two points about Algorithm~\ref{alg:flexible_base_score_extraction} and the distance assignment (Definition~\ref{def:algorithm_distance_constants}):
    \begin{enumerate}
      \item The maximum distance span $D = 1 + n\cdot\delta + m\cdot\Delta$ depends only on the counts $n$ and $m$ of $\succ$ and $\succ\!\succ$ relations in the ordering. Since every $s\in S$ is indifferent to some $a\in \mathcal{X}$, the numbers $n$ and $m$ for relations between distinct preference tiers remain unchanged; hence $D$ is unchanged.
      \item When traversing the ordering, Algorithm~\ref{alg:flexible_base_score_extraction} increases the running distance $d(x)$ only when the previous relation was $\succ$ (by $\delta$) or $\succ\!\succ$ (by $\Delta$), and leaves $d(x)$ unchanged when the previous relation is $\simeq$. Therefore, inserting arguments that are indifferent to existing ones does not change the distance values assigned to the original arguments in $\mathcal{X}$ (they either share the same distance as before or receive the same $d(x)$ as their indifferent partner).
    \end{enumerate}
    Because both $\nu_1(\succeq)(x)$ and $\nu_2(\succeq)(x)$ depend only on $D$ and the per-argument distances $d(x)$, and neither $D$ nor the distances of original arguments change under the described insertion of indifferent arguments, it follows that the base scores assigned to arguments in $X$ remain identical:
    \[
    \nu(\succeq) \;=\; \nu'(\succeq').
    \]
    Hence, both $\nu_1$ and $\nu_2$ satisfy Base Score Preference Stability.
\end{proof}
Proof for Proposition~\ref{prop:2}:
\begin{proof}
    Let $\nu$ be a base score extraction function satisfying Axiom~\ref{axiom:1}, and let $\sigma$ be a gradual semantics that satisfies \emph{monotonicity} and \emph{balance}.
    
    Take two arguments $a,b\in X$ such that $a \succ b$ and suppose that $a$ and $b$ have equivalent sets of supporters and attackers (i.e.\ the multiset of incoming supports and attacks and their sources are identical up to renaming between $a$ and $b$).
    By Axiom~\ref{axiom:1}, we have $\tau(a) > \tau(b)$.
    
    Now consider the evaluation of $\sigma$ on the QBAF where all base scores except those of $a$ and $b$ are held fixed.
    Monotonicity of $\sigma$ implies that, when all other inputs are unchanged, increasing an argument's base score (while keeping others fixed) cannot decrease its final strength:
    \[
    \tau(a) > \tau(b) \quad\Longrightarrow\quad \sigma(a) \ge \sigma(b),
    \]
    and more precisely, a strict increase in the base score (with identical influences) yields a strict increase in the final strength under standard monotonicity assumptions used in the literature.
    
    Balance ensures that the semantics treats arguments that have the same structural influences (same attackers/supporters with the same base scores) symmetrically: if two arguments have identical incoming influences and identical incoming relations, then differences in their final strengths are entirely attributable to differences in their own base scores.
    
    Combining these two properties: since $a$ and $b$ receive the same aggregated influence from the rest of the framework (by hypothesis) and $\tau(a)>\tau(b)$ (by Axiom~\ref{axiom:1}), the monotonicity and balance of $\sigma$ imply
    \[
    \sigma(a) > \sigma(b).
    \]
    Thus, when $a\succ b$ and $a,b$ have equivalent supporters and attackers, the semantics $\sigma$ ranks $a$ strictly above $b$.
\end{proof}
\end{appendices}

\end{document}